\def\eqref#1{equation~\ref{#1}}
\DeclarePairedDelimiter{\ceil}{\lceil}{\rceil}
\def\1{\bm{1}}
\DeclareMathAlphabet{\mathsfit}{\encodingdefault}{\sfdefault}{m}{sl}
\SetMathAlphabet{\mathsfit}{bold}{\encodingdefault}{\sfdefault}{bx}{n}
\DeclareMathOperator*{\argmin}{arg\,min}
\DeclarePairedDelimiter{\norm}{\lVert}{\rVert}
\DeclareMathOperator{\sign}{sign}
\DeclarePairedDelimiterX{\inp}[2]{\langle}{\rangle}{#1, #2}
\DeclareMathOperator\erf{erf}
\newtheorem{prop}{Proposition}
\newtheorem{example}{Example}
\begin{document}

\title{Unique properties of adversarially trained linear classifiers on Gaussian data}

\author{Jamie Hayes\\
University College London\\
{\tt\small j.hayes@cs.ucl.ac.uk}
}


\maketitle
\thispagestyle{empty}

\begin{abstract}

Machine learning models are vulnerable to adversarial perturbations, that when added to an input, can cause high confidence misclassifications.
The adversarial learning research community has made remarkable progress in the understanding of the root causes of adversarial perturbations.
However, most problems that one may consider important to solve for the deployment of machine learning in safety critical tasks involve high dimensional complex manifolds that are difficult to characterize and study. 
It is common to develop adversarially robust learning theory on simple problems, in the hope that insights will transfer to `real world datasets'.
In this work, we discuss a setting where this approach fails. 
In particular, we show with a linear classifier, it is always possible to solve a binary classification problem on Gaussian data under arbitrary levels of adversarial corruption during training, and that this property is not observed with non-linear classifiers on the CIFAR-10 dataset. 

\end{abstract}


\section{Introduction \& background}
\label{sec: introduction}

In the standard nomenclature of supervised machine learning, we let $f_{\theta}: \mathcal{X} \rightarrow \mathbb{R}^k$, represent a hypothesis, where $k$ is the number of classes in the classification problem, and $\theta$ are parameters of the hypothesis function.
We associate this hypothesis with a loss function $\ell : \mathbb{R}^k \times \mathcal{Y} \rightarrow \mathbb{R}$, that takes the output of $f_{\theta}$ as an input along with the true class, and outputs a `cost' associated with the prediction. 
Given a training set $\{(x^1, y^1), (x^2, y^2), \dots, (x^n, y^n)\}$, where $(x^i, y^i)$ are independently drawn from the joint distribution, $(\mathcal{X}, \mathcal{Y})$, \emph{standard training} through empirical risk minimization states we should choose a hypothesis that solves:

\begin{align}
    \argmin_{\theta} \frac{1}{n}\sum^n_{i=1}\ell(f_{\theta}(x^i), y^i) \label{eq: erm}
\end{align}

By choosing a hypothesis that solves \cref{eq: erm}, since the training set was drawn independently from the joint distribution $(\mathcal{X}, \mathcal{Y})$, we expect a small loss on other samples that will be drawn from this distribution in the future, and consequently, the learned hypothesis to perform well on future data.
Adversarial training~\citep{madry2017towards} has been proposed as a method to boost the performance of a classifier against adversarial examples~\citep{szegedy2013intriguing, biggio2018wild}, by choosing a hypothesis that solves:

\begin{align}
    \argmin_{\theta} \frac{1}{n}\sum^n_{i=1}\max_{\norm{\delta^i}\leq \epsilon}\ell(f_{\theta}(x^i + \delta^i), y^i) \label{eq: adv_erm}
\end{align}

By choosing a hypothesis that solves \cref{eq: adv_erm}, we expect, for $x\in \mathcal{X}$, the classification of $f_{\theta}(x)$ is equal to $f_{\theta}(x + \delta)$, providing $\delta$ is small with respect to a norm, $\norm{\cdot}$. 
Throughout this work, we consider perturbations, $\delta$, with an $\ell_{\infty}$ norm smaller than $\epsilon$.

The adversarial learning research community has expended a great deal of effort to understand generalization properties of machine learning models learnt with adversarial training. 
By studying simple data models, one can understand how adversarial training affects the (standard and robust) generalization error of a hypothesis. In particular, works in this space have studied the following simple data and hypothesis models: 

\vspace{0.5cm}

\noindent \textbf{Data model.} Let $(x,y)\in \mathbb{R}^d \times \{\pm 1\}$, where $y\in\{\pm 1\}$ is sampled uniformly at random, and $x\mid y \sim \mathcal{N}(y\mu, \sigma^2 I)$, where  $\mu\in\mathbb{R}_{+}^d$ and $\sigma \in \mathbb{R}_{+}$.

\vspace{0.5cm}

\noindent \textbf{Hypothesis model.} We consider the linear classifier $F_{\theta}(x) = \sign(f_{\theta}(x))$, where $f_{\theta}(x) = \inp{\theta}{x}$ and $\theta\in\mathbb{R}^d$.

\vspace{0.5cm}

\citet{schmidt2018adversarially} show that under this simple data and hypothesis model, adversarial training requires $\Omega(d)$ number of training samples, while standard training needs only a constant number of training samples to reach a small generalization error.
\citet{tsipras2018robustness} show that in a similar setting a trade-off between the test set
accuracy of a model on standard inputs and its robustness to adversarial perturbations provably exists, and demonstrated this phenomenon can be observed in more complex data and hypothesis models.
\citet{nakkiran2019adversarial} went on to prove that robustness may come at the expense of simplicity, by demonstrating under a simple data model, there exists no linear classifier that is robust to adversarial examples.

\citet{chen2020more} proved that under this data and hypothesis model, there exists cases where adversarial training can cause the generalization gap between an adversarially trained classifier and a standard classifier to increase and subsequently decrease with more training data, and cases where more training data increases this gap. 
The exact behavior of the generalization gap as the size of training set grows is dependent on the strength of the adversary (also referred to as the adversaries \emph{perturbation budget}) used in the inner-maximization in \cref{eq: adv_erm} -- that is, the choice of $\epsilon$. 
\citet{min2020curious} showed under the same setting, depending on the choice of $\epsilon$, as the number of training inputs increases, the generalization error can either (1) monotonically decrease, (2) form a double descent curve where the error decreases, increases and then decreases again, or (3) decrease and then monotonically increase. 
\citet{chen2020more} and~\citet{min2020curious} proved these properties using a linear loss function, that has also been studied in relation to other robustness properties by~\citet{khim2018adversarial} and~\citet{yin2018rademacher}, and is defined by:

\begin{align}
\label{eq: linear_loss}
\begin{split}
    \ell(f_{\theta}(x), y) &= -yf_{\theta}(x) \\
    &= -y\inp{\theta}{x} 
\end{split}
\end{align}

\noindent \textbf{Contributions.} 
In this work, we discuss why it may be insufficient to consider such simple data and hypothesis models in the quest to understand how adversarial training affects the generalization error on `real-world datasets'. 
We prove that in this simple setting, a linear classifier with a linear loss can learn a hypothesis with a standard generalization error equal to the Bayes error rate, under adversarial training with an arbitrarily large perturbation budget, $\epsilon$. 
Additionally, we give empirical evidence this result holds for other types of loss function.
Since the perturbation budget can be arbitrarily large, this implies that the classifier can learn under seemingly completely mislabeled data, a property we show is not exhibited on an adversarially trained model on the CIFAR-10 dataset~\citep{krizhevsky2009learning}. 
We argue for caution when studying adversarial training and related robustness properties under these simple settings. It is not necessarily the case that they are informative of more general properties of adversarial robustness that one may desire to understand.

\section{Further background}
\label{sec: background}

In this section, in addition to the aforementioned related work introduced in~\cref{sec: introduction}, we briefly survey some of the most relevant works on the generalization of adversarially robust models.

Similar to work by~\citet{tsipras2018robustness} and~\citet{nakkiran2019adversarial}, \citet{papernot2016towards} formally studied the trade-off between robustness, simplicity and accuracy, finding that adversarial example vulnerability can manifest due to fundamental limitations in the expressivity of the hypothesis class, and can be resolved by moving to a richer hypothesis class.
The generalization error under adversarial perturbations within an $\epsilon$-ball (referred to as the \emph{robust error}), is at least as large as standard test set error.
Using this observation, \citet{zhang2019theoretically} studied the trade-off between robustness and accuracy in a simple data model setting by decomposing the robust error into the sum of the standard generalization error and the \emph{boundary error}. 
The boundary error corresponds to how likely input features are within an $\epsilon$ distance to the decision boundary. 
By minimizing a differentiable upper bound, they develop a new adversarial example defense that scales to large datasets such as tiny ImageNet.

\citet{dohmatob2018generalized} generalizes the results of \citet{tsipras2018there} to a richer set of distributions, showing that high test set accuracy is not sufficient to prove the absence of adversarial examples.
\citet{raghunathan2019adversarial} further explore these trade-offs by showing adversarial training hurts generalization even when the optimal predictor has both optimal standard and robust accuracy, and show incorporating unlabeled data in training, as introduced by \citep{carmon2019unlabeled, najafi2019robustness, uesato2019are}, reduces adversarial example vulnerability.
\citet{raghunathan2020understanding} continue this investigation, developing a theoretical characterization of the trade-off between standard and robust error in linear regression that motivates the incorporation of unlabeled data in training as a method to improve robust error without sacrificing standard error.

Other works explore adversarial robustness through the lens of randomization~\citep{pinot2019theoretical}, computational hardness~\citep{bubeck2018adversarial, gourdeau2019hardness, mahloujifar2018can}, statistical and PAC learning~\citep{diochnos2019lower, cullina2018pac, montasser2019vc}, and optimal transport theory~\citep{bhagoji2019lower, li2018optimal, pydi2019adversarial}.

Importantly, our results do not contradict works on the fundamental robustness-accuracy trade-off. 
Our work shows it is possible to achieve perfect test set accuracy under any perturbation budget used in adversarial training.
We make no comment on the robustness of the final classifier under the data model introduced in~\cref{sec: introduction}. 
Typically, robustness-accuracy trade-offs are explored under the assumption that the adversarial perturbation budget is small in comparison to the possible space of feature values.
In our work, the semantic definition of an adversarial example is blurred, since during training, the perturbation budget is large enough such that the perturbed dataset is equivalent to a mislabeling of each data point.

\section{A linear classifier has small generalization error if $\theta$ is positive}
\label{sec: problem1}

We begin by showing that, under the previously introduced data and hypothesis model, if the learnt parameter $\theta$ is positive, then the linear classifier has a generalization error equal to the Bayes error rate. 
We first give an intuitive analysis of why in one dimension and the proceed to generalize to the higher dimensional case.

\begin{figure}[t]
  \centering
  \includegraphics[width=0.75\linewidth]{./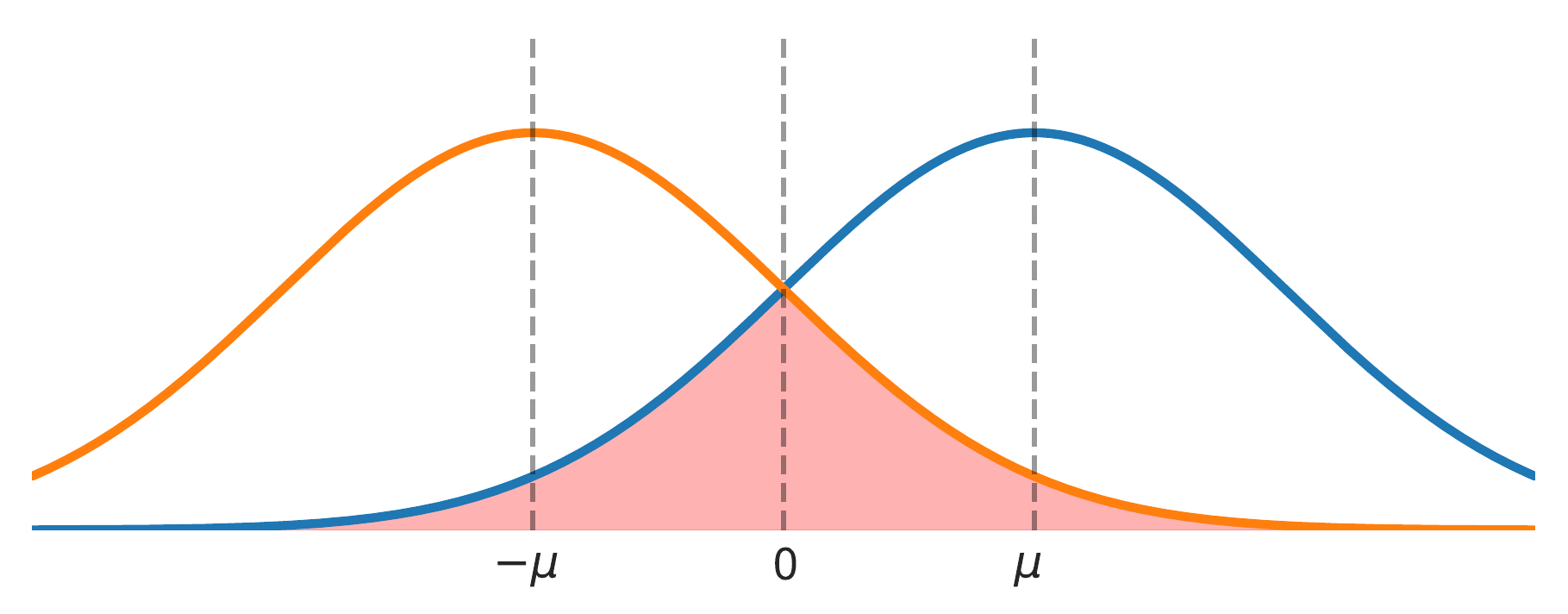}
  \caption{Illustration of a one dimensional Gaussian data model introduced in~\cref{sec: introduction}. The red shaded region under the curve corresponds to the prediction errors incurred by the Bayes classifier.}
  \label{fig:gaussian_1d}
\end{figure}

\Cref{fig:gaussian_1d} gives an illustration of the data model defined in \cref{sec: introduction} for $d=1$. 
The Bayes error is given by the red shaded region, and clearly the optimum decision boundary is centered at $\frac{\mu + (-\mu)}{2}=0$.
The Bayes error rate of this data model can be written as:
\begin{align}
    \label{eq: bayes_error}
    \begin{split}
        P_{\text{error}} &= \frac{1}{2}\int^0_{-\infty}P(x|y=1)dx + \frac{1}{2}\int^{\infty}_{0}P(x|y=-1)dx \\
        &= \frac{1}{2}\int^0_{-\infty} \frac{1}{\sqrt{2\pi}\sigma}e^{-\frac{(x-\mu)^2}{2\sigma^2}}dx + \frac{1}{2}\int^{\infty}_{0}\frac{1}{\sqrt{2\pi}\sigma}e^{-\frac{(x+\mu)^2}{2\sigma^2}}dx \\
        &= \frac{1}{2}\big(1 - \erf(\frac{\mu}{\sqrt{2}\sigma})\big)
    \end{split}
\end{align}

Since the optimal decision boundary is at zero, the linear classifier will achieve the minimal possible generalization error if $f_{\theta}(x)$ does not flip the sign of the input, which is equivalent to constraining $\theta$ to be positive. 
The signed linear classifier achieves the Bayes error rate if $\sign(\theta) = 1$. 

We now show the zero vector is the optimal decision boundary in $d>1$ dimensions, and that as $d$ grows, if $\theta>0$, the probability that $\sign(\theta^Tx)\neq\sign(x)$ shrinks to zero. 
Thus the Bayes error tends to zero and it is achieved when $\theta >0$.

\vspace{0.5cm}

\begin{prop}
\label{prop:d_dimension_bayes}

If $\theta>0$, then we achieve the Bayes error rate in $d$ dimensions.

\end{prop}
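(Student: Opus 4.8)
The plan is to reduce the $d$-dimensional error to a one-dimensional Gaussian tail, mirroring the scalar argument, and then compare that tail to the Bayes tail. First I would identify the Bayes-optimal rule: the class priors are equal and the two conditionals are spherical Gaussians $\mathcal{N}(\pm\mu,\sigma^2 I)$, so the log-likelihood ratio between $y=1$ and $y=-1$ at a point $x$ is proportional to $\langle\mu,x\rangle$ (the quadratic terms cancel). Hence the MAP classifier is $x\mapsto\sign(\langle\mu,x\rangle)$ and its decision boundary is the hyperplane $\{x:\langle\mu,x\rangle=0\}$, which passes through the origin --- this is the precise sense in which ``the zero vector is the optimal decision boundary'' for $d>1$. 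Conditioning on $y$ and using $\langle\mu,x\rangle\mid y\sim\mathcal{N}\bigl(y\|\mu\|_2^2,\ \sigma^2\|\mu\|_2^2\bigr)$, the Bayes error rate comes out to $\tfrac{1}{2}\bigl(1-\erf(\|\mu\|_2/(\sqrt{2}\,\sigma))\bigr)$, the natural generalization of \cref{eq: bayes_error}.

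Next I would compute the error of the linear classifier $F_\theta$ for an arbitrary $\theta$ with strictly positive entries. By the label symmetry of the model (the conditionals are reflections of one another and $\sign(\langle\theta,\cdot\rangle)$ is odd) it suffices to condition on $y=1$; writing $x=\mu+\sigma z$ with $z\sim\mathcal{N}(\vzero,I)$ gives $\langle\theta,x\rangle=\langle\theta,\mu\rangle+\sigma\langle\theta,z\rangle\sim\mathcal{N}\bigl(\langle\theta,\mu\rangle,\ \sigma^2\|\theta\|_2^2\bigr)$. Since $\theta$ and $\mu$ both lie in the positive orthant, $\langle\theta,\mu\rangle>0$, so $F_\theta$ misclassifies with probability $\tfrac{1}{2}\bigl(1-\erf(\langle\theta,\mu\rangle/(\sqrt{2}\,\sigma\,\|\theta\|_2))\bigr)$: the only source of error is Gaussian noise projected onto the $\theta$-direction, never a sign flip of the signal, exactly as in one dimension where $\theta>0$ forces $\sign(\theta x)=\sign(x)$. (In particular a union bound shows $F_\theta$ disagrees with the Bayes rule $\sign(\langle\mu,x\rangle)$ with probability at most the sum of the two misclassification probabilities above.)

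Finally I would show these agree as the dimension grows. Cauchy--Schwarz gives $\langle\theta,\mu\rangle/\|\theta\|_2\le\|\mu\|_2$, so the error of $F_\theta$ is always at least the Bayes error, with equality iff $\theta$ is parallel to $\mu$; the substantive claim is that both quantities vanish as $d\to\infty$. Under the regime in which this model is studied --- the entries of $\mu$, and of a trained $\theta$, are of constant order, i.e.\ bounded above and bounded away from $0$ --- one has $\|\mu\|_2=\Theta(\sqrt{d})$ and $\langle\theta,\mu\rangle/\|\theta\|_2=\Theta(\sqrt{d})$, so both $\erf(\,\cdot\,)$ terms tend to $1$ and both errors tend to $0$. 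Hence $F_\theta$ attains the Bayes error rate in the $d\to\infty$ limit whenever $\theta>0$.

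I expect the last step to be the main obstacle. The projection onto the $\theta$-direction is immediate from linearity and Gaussianity, but positivity of $\theta$ alone only yields $\langle\theta,\mu\rangle/(\sigma\|\theta\|_2)\ge(\min_i\mu_i)/\sigma$, a constant rather than a divergent quantity (a $\theta$ with one dominant coordinate and the rest $O(1/d)$ keeps the error bounded away from $0$). Driving the tail to zero therefore requires invoking the comparable-magnitude scaling of the coordinates of $\theta$ and $\mu$ that the model implicitly assumes; this is where the hypothesis ``$\theta>0$'' does its real work, and I would state that scaling assumption explicitly.
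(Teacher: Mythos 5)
Your proposal follows essentially the same route as the paper's proof: derive the Bayes rule $\sign(\langle\mu,x\rangle)$ from the log-likelihood ratio, reduce the error of $F_\theta$ to a one-dimensional Gaussian tail by conditioning on $y$ and projecting onto $\theta$, and let $d\to\infty$. The difference is that you carry the computation through exactly, and in doing so you expose a step that the paper glosses over. The paper asserts that $P(\sum_j x_j>0)=\Phi(\sqrt{d}\mu/\sigma)\to 1$ implies $P(\sum_j\theta_j x_j>0)\to 1$ for \emph{any} positive $\theta$; as your calculation shows, the relevant quantity is $\langle\theta,\mu\rangle/(\sigma\|\theta\|_2)=\mu\|\theta\|_1/(\sigma\|\theta\|_2)$ when $\mu_j\equiv\mu$, and positivity alone only bounds this below by $\mu/\sigma$. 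Your example of a $\theta$ with one dominant coordinate, for which the error stays bounded away from zero, is a genuine counterexample to the proposition as literally stated for arbitrary positive $\theta$ in the large-$d$ limit; the conclusion needs either the Cauchy--Schwarz equality case ($\theta\propto\mu$, giving the exact Bayes error at every finite $d$) or the comparable-coordinates scaling you make explicit (so that $\|\theta\|_1/\|\theta\|_2=\Theta(\sqrt{d})$ and the error vanishes). Your version is therefore not just correct but strictly more careful than the paper's; the observation that for finite $d$ a generic positive $\theta$ attains an error strictly \emph{above} the Bayes rate (with equality iff $\theta\parallel\mu$) is also a sharper statement than the one proved in the text.
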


\begin{proof}
Let $x\in\mathbb{R}^d$, so if $y=1$, $x$ is sampled from a Gaussian with pdf equal to $\frac{1}{(2\pi)^{\frac{d}{2}}\sigma}\exp{(-\frac{1}{2}(x-\mu)^{T}\sigma^2 I(x-\mu) )}$.
Similarly if $y=-1$, $x$ is sampled from a Gaussian with pdf equal to $\frac{1}{(2\pi)^{\frac{d}{2}}\sigma}\exp{(-\frac{1}{2}(x+\mu)^{T}\sigma^2 I(x+\mu) )}$.

The Bayes decision rule is found at:

\begin{align}
    &P(y=1\mid x) = P(y=-1\mid x) \\
    \implies &\log(\frac{P(y=1\mid x)}{P(y=-1\mid x)}) =0 \\
    \implies &\log(\frac{P(x\mid y=1)}{P(x\mid y=-1)}) =0 \qquad\qquad \text{ since } P(y=1)=P(y=-1) \\
    \implies &\frac{2}{\sigma^2}\mu^{T}x \gtrless 0 
\end{align}

Note that $\frac{2}{\sigma^2}\mu^{T}x > 0 \iff x>0$ and $\frac{2}{\sigma^2}\mu^{T}x < 0 \iff x<0$, since $\frac{2}{\sigma^2}\mu^{T}>0$.
Clearly then, we again have a decision rule that is optimal if the sign of $x$ is not flipped. 
That is, for $\theta\in\mathbb{R}^d$, $\sign(\sum_{j=1}^d \theta_j x_j) = \sign(\sum_{j=1}^d x_j)$.

In the case where $\forall j\in [d]$, $\mu_j = \mu$, we have $\sum_{j=1}^d x_j \sim \mathcal{N}(dy\mu, d\sigma^2I)$.
Without loss of generality, let $y=1$, then $P(\sum_{j=1}^d x_j >0)=\Phi(\frac{\sqrt{d}\mu}{\sigma})$. 
As $d\rightarrow \infty$, $\Phi(\frac{\sqrt{d}\mu}{\sigma})\rightarrow 1$, and so $P(\sum_{j=1}^d \theta_jx_j > 0)\rightarrow1$ if $\theta > 0$. 
Clearly then any $\theta >0$ gives the Bayes optimal classifier in $d$ dimensions and the Bayes error approaches zero as $d$ increases.

\end{proof}

In sum, to show a linear classifier achieves the smallest generalization error under adversarial training, we must show it is more likely that the learned parameter $\theta_j$ is positive, $\forall j\in[d]$, under \emph{any} adversarial perturbation budget, $\epsilon$.

\section{A linear classifier can have perfect standard test set accuracy under adversarial training with any $\epsilon > 0$}
\label{sec: problem1}

We show here that it is more likely that a linear classifier with a linear loss optimizing~\cref{eq: adv_erm} via gradient descent will learn positive parameters, $\theta$, for a sufficiently large number of training steps.

We first note that for the linear loss, the inner-maximization term in~\cref{eq: adv_erm} can be solved exactly: $\max_{\norm{\delta}_{\infty} \leq \epsilon}\ell(f_{\theta}(x + \delta), y) = -y\inp{\theta}{x} + \epsilon\norm{\theta}_1$. 
For $j\in[d]$, denote the value of $j^{\text{th}}$ index of $\theta$, at the $k^{\text{th}}$ iteration of gradient descent, as $\theta^k_j$. 
We have $\frac{\partial \ell}{\partial \theta_j} = -yx_j + \epsilon\sign(\theta_j)$, and so for a given learning rate, $\eta$, we have the following relation:

\begin{align}
\theta^{k+1}_j &= \theta^{k}_j + \eta\big(yx_j - \epsilon\sign(\theta^k_j)\big) 
\end{align}

Note that, in expectation, $\mathbb{E}_{x_j\sim\mathcal{N}(y\mu_j, \sigma^2)}[yx_j] = \mu_j$, and thus, on average, gradient descent obeys the following relation:

\begin{align}
\theta^{k+1}_j = \begin{cases}
    \theta^k_j + \eta(\mu_j + \epsilon) , & \text{if } \theta^k_j < 0 \\
    \theta^k_j + \eta\mu_j  , & \text{if } \theta^k_j = 0 \\
    \theta^k_j + \eta(\mu_j - \epsilon) , & \text{if } \theta^k_j > 0 
  \end{cases}
  \label{eq: theta_iter}
\end{align}

Throughout this work, we consider an initial $\theta^1_j\sim \mathcal{N}(0, \sigma^2I)$, and so

\begin{align}
\sign(\theta^1_j) = \begin{cases}
    -1  , & \text{w.p } \nicefrac{1}{2} \\
    +1  , & \text{w.p } \nicefrac{1}{2}
  \end{cases}
\end{align}

Firstly,  if $0<\epsilon < \mu_j$, then $\lim_{k\rightarrow \infty} \theta^k_j = \infty$. Thus, the linear classifier found by optimizing a linear loss approaches the Bayes error rate if, during adversarial training, the adversary cannot, in expectation, change the sign of an input. 
Before we show that it is more likely that a linear classifier optimizing~\cref{eq: adv_erm} can achieve a generalization error equal to the Bayes error rate under any adversarial perturbation budget, we motivate our findings by way of example:

\vspace{0.5cm}

\begin{example}
Let $\theta^k_j = r$, where $r$ is positive but negligible, $\eta=\frac{1}{2}$, $\epsilon=\frac{3}{2}\mu$, $\mu = 1$. 

Then $\theta^k_j = r \rightarrow \theta^{k+1}_j = r - \frac{1}{4} \rightarrow \theta^{k+2}_j = r + 1 \rightarrow \theta^{k+3}_j = r + \frac{3}{4} \rightarrow \theta^{k+4}_j = r + \frac{1}{2} \rightarrow \theta^{k+5}_j = r + \frac{1}{4} \rightarrow \theta^{k+6}_j = r$. 
Thus, only $\{\theta^{k+1}_j\}$ is negative while $\{\theta^k_j, \theta^{k+2}_j, \theta^{k+3}_j, \theta^{k+4}_j, \theta^{k+5}_j\}$ are all positive. If we train for $m$ steps, where $m$ is a large integer chosen at random, it is five times more likely that the final learned parameter, $\theta^m_j$, is positive. Thus, it is more likely gradient descent will output a classifier that achieves small generalization error.
\end{example}

We now formalize the above example. Firstly, by proving that if $\epsilon > \mu_j$, then zero is an attraction point in \cref{eq: theta_iter}. 

\vspace{0.5cm}

\begin{prop}
 Let $\epsilon > \mu_j  $, then $\forall n \in \mathbb{N}$, $\exists n_0 \geq n$ such that $\theta^{n_0}_j < 0$.
\end{prop}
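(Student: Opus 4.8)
The plan is to analyze the deterministic-in-expectation recursion in \cref{eq: theta_iter} and show that the positive region $\{\theta_j > 0\}$ cannot be absorbing when $\epsilon > \mu_j$, so that infinitely often the iterate drops below zero. The key observation is that on the positive branch the increment is $\eta(\mu_j - \epsilon) < 0$, so as long as $\theta^k_j$ stays positive it strictly decreases by a fixed amount $\eta(\epsilon - \mu_j) > 0$ each step. A sequence that decreases by a fixed positive quantity at every step cannot remain bounded below by $0$ forever, so eventually some iterate becomes $\le 0$; I would make this precise by an Archimedean argument: starting from any $\theta^n_j$, after at most $\lceil \theta^n_j / (\eta(\epsilon - \mu_j)) \rceil + 1$ steps (during which, by induction, all iterates are still positive and hence follow the positive branch) the value must fall to $0$ or below.

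The main steps, in order, are: (1) fix $n$ and consider the iterates $\theta^n_j, \theta^{n+1}_j, \dots$; (2) argue by contradiction — suppose $\theta^m_j \ge 0$ for all $m \ge n$; (3) split into the case $\theta^m_j > 0$, where the recursion gives $\theta^{m+1}_j = \theta^m_j - \eta(\epsilon - \mu_j)$, and the case $\theta^m_j = 0$, where $\theta^{m+1}_j = \eta\mu_j > 0$ so we return to the positive branch on the following step; (4) conclude that from any point in the nonnegative region the value is driven down by at least $\eta(\epsilon-\mu_j)$ within at most two steps, contradicting nonnegativity after finitely many iterations. This yields some $n_0 \ge n$ with $\theta^{n_0}_j < 0$, which is the claim. (One should note the boundary bookkeeping: if $\theta^{m}_j$ lands exactly on $0$ it bounces back up, but the very next decrement still pushes it strictly down, so the contradiction survives; and the step that finally crosses below $0$ from a small positive value indeed produces a strictly negative value since the decrement $\eta(\epsilon - \mu_j)$ is a fixed constant independent of how close to $0$ we started.)

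The main obstacle, such as it is, is handling the $\theta^m_j = 0$ edge case cleanly so the "strictly decreasing by a fixed amount" argument is not broken by the upward bounce at the origin — the fix is to observe that the origin is visited at most every other step, so over any window of two consecutive steps the net movement is still bounded away from staying nonnegative indefinitely. A secondary subtlety is that \cref{eq: theta_iter} is only the expected recursion, so strictly speaking the statement as phrased should be read as a statement about this averaged dynamics (the deterministic surrogate for gradient descent); I would state the argument for that recursion, matching the paper's framing in the preceding paragraph. Beyond that the proof is an elementary monotonicity/Archimedean argument with no real analytic content.
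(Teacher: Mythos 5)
Your argument is correct and is essentially the paper's own proof: on the positive branch the iterate decreases by the fixed amount $\eta(\epsilon-\mu_j)$, an Archimedean step forces it to reach $0$ or below, and the bounce at the origin is handled by noting that the subsequent run starting from $\eta\mu_j$ again drifts downward past zero. The only caveat --- one the paper's own proof shares, so it does not count against you --- is that your ``net movement over a two-step window'' patch does not literally rule out the measure-zero periodic orbit that occurs when an iterate lands exactly on $0$ and $\epsilon=\frac{s}{s-1}\mu_j$ for some integer $s\geq 2$, in which case the sequence cycles through nonnegative values forever.
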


\begin{proof}
Without loss of generality, let $\theta^n_j > 0$, otherwise we can take $n_0$ to be $n$. 
By \cref{eq: theta_iter}, the sequence $\theta^n_j \rightarrow \theta^{n+1}_j \rightarrow \theta^{n+2}_j \rightarrow \dotsb$ is decreasing until, for some $m\in \mathbb{N}$, $\theta^{n+m}_j < 0$ or $\theta^{n+m}_j = 0$. 
If $\theta^{n+m}_j < 0$ we take $n_0$ to be $n+m$.
Otherwise, if $\theta^{n+m}_j = 0$, then $\theta^{n+m+1}_j = \eta\mu_j$ and $\theta^{n+m+2}_j = \eta(2\mu_j - \epsilon)$. 
Similarly, if $\epsilon > 2\mu_j$, then $\theta^{n+m+2}_j < 0$. 

Now, we focus on the case where $\theta^{n+m+2}_j = \eta(2\mu_j - \epsilon) > 0$. 
Observe that, for $s\in\mathbb{N}$, under the assumption that we have not observed a negative value before $\theta^{n+m+s}_j$, then $\theta^{n+m+s}_j = \eta(s\mu_j - (s-1)\epsilon)$. 
Now, $\theta^{n+m+s}_j = \eta(s\mu_j - (s-1)\epsilon) >0$ when $\epsilon <\frac{s}{s-1}\mu_j$. 
However, in the limit, $\lim_{s\rightarrow\infty} \frac{s}{s-1}\mu_j = \mu_j$, and given that we assume $\mu_j < \epsilon$, there exists $s_0$, such that  $\theta^{n+m+s_0}_j < 0$.
\end{proof}

Using an almost identical argument, it is easy to show $\forall n \in \mathbb{N}$, $\exists n_0 \geq n$ such that $\theta^{n_0}_j > 0$, and so $\theta^k_j$ oscillates around zero as $k\rightarrow \infty$. 
However, the next proposition illustrates that, providing $\epsilon > \mu_j$, as the number of iterations of gradient descent increases, it decreases in probability that the learned parameter, $\theta_j$, is negative. 
This is because after a sufficient number of iterations, whenever the learned parameters turns negative, since $\epsilon > \mu_j$, the parameter value at the following iteration is guaranteed to be positive. After this, we show the converse doesn't hold; for any $\epsilon > \mu_j$ and a sufficiently large number of training steps, it does not always hold that the next parameter in gradient descent is negative if the preceding parameter is positive.

\vspace{0.5cm}

\begin{prop}
\label{prop:next_is_pos}
Assuming $\mu_j < \epsilon$, let $n$ denote the minimum index where $\theta^n_j < 0$ and $n > n_0$, where $n_0$ is the first index satisfying $\theta^{n_0}_j > 0$. That is, if $\theta^1_j >0$, then $n_0=1$ and we consider the smallest $n>n_0$ satisfying $\theta^n_j<0$. 
Otherwise, if $\theta^1_j <0$, then let $n_0$ be the minimum index satisfying $\theta^{n_0}_j > 0$, and so $n$ is the minimum index greater than $n_0$ satisfying $\theta^n_j<0$. Then, $\theta^{n+1}_j > 0$, and $\forall k \in \mathbb{N}$, $\theta^{n+k}_j < 0 \implies \theta^{n+k+1}_j > 0$.
\end{prop}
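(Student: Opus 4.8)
The plan is to exploit one quantitative fact about the recursion \cref{eq: theta_iter}: once the iterate is positive, the only way it can become negative is by a single downward step of size $\eta(\epsilon-\mu_j)$, so a negative iterate can never drop below $-\eta(\epsilon-\mu_j)$. Since $\mu_j>0$ and $\epsilon>\mu_j$, the upward step applied to a negative iterate has size $\eta(\epsilon+\mu_j)$, which exceeds $\eta(\epsilon-\mu_j)$ by $2\eta\mu_j>0$; hence one step from any negative iterate overshoots $0$. Making this precise requires showing inductively that every negative iterate occurring at an index $\ge n$ lies in the interval $(-\eta(\epsilon-\mu_j),0)$.

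Concretely, I would prove the invariant: \emph{for every $m\ge n$, if $\theta^m_j<0$ then $0<\theta^{m-1}_j<\eta(\epsilon-\mu_j)$, and therefore $-\eta(\epsilon-\mu_j)<\theta^m_j<0$}, by strong induction on $m$. For the base case $m=n$: since $n$ is the smallest index exceeding $n_0$ with a negative iterate and $\theta^{n_0}_j>0$, we have $n-1\ge n_0$ and $\theta^{n-1}_j\ge 0$; the case $\theta^{n-1}_j=0$ is impossible because \cref{eq: theta_iter} would give $\theta^n_j=\eta\mu_j>0$, so $\theta^{n-1}_j>0$ and $\theta^n_j=\theta^{n-1}_j-\eta(\epsilon-\mu_j)<0$, which forces $\theta^{n-1}_j<\eta(\epsilon-\mu_j)$ and hence $\theta^n_j>-\eta(\epsilon-\mu_j)$. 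For the inductive step, suppose the invariant holds at all indices in $[n,m-1]$ and $\theta^m_j<0$; I examine $\theta^{m-1}_j$. It cannot be negative, for then the inductive hypothesis gives $\theta^{m-1}_j>-\eta(\epsilon-\mu_j)$, whence $\theta^m_j=\theta^{m-1}_j+\eta(\epsilon+\mu_j)>2\eta\mu_j>0$, a contradiction; it cannot be $0$, for then $\theta^m_j=\eta\mu_j>0$; so $\theta^{m-1}_j>0$, and the same computation as in the base case pins $\theta^{m-1}_j$ into $(0,\eta(\epsilon-\mu_j))$ and $\theta^m_j$ into $(-\eta(\epsilon-\mu_j),0)$.

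With the invariant established, the proposition follows immediately: for any $k$ with $\theta^{n+k}_j<0$ we have $\theta^{n+k}_j>-\eta(\epsilon-\mu_j)$, so $\theta^{n+k+1}_j=\theta^{n+k}_j+\eta(\epsilon+\mu_j)>-\eta(\epsilon-\mu_j)+\eta(\epsilon+\mu_j)=2\eta\mu_j>0$; taking $k=0$ gives $\theta^{n+1}_j>0$. That the indices $n_0$ and $n$ exist at all is guaranteed by the preceding proposition, which shows the iterates are negative — and, by the same argument, positive — infinitely often. I expect the only real care needed is in setting up the induction so that it is not circular (the negativity of $\theta^m_j$ must be used to deduce facts about $\theta^{m-1}_j$, not conversely) and in disposing cleanly of the boundary case $\theta^{m-1}_j=0$; there is no deeper obstacle, since everything reduces to the single inequality $\eta(\epsilon+\mu_j)>\eta(\epsilon-\mu_j)$.
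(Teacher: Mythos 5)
Your proposal is correct and takes essentially the same route as the paper: the whole argument rests on the lower bound $\theta^{n+k}_j > \eta(\mu_j-\epsilon)$ for every negative iterate after $n$, from which one upward step of size $\eta(\mu_j+\epsilon)$ lands above $2\eta\mu_j>0$. The only difference is that the paper asserts this lower bound without justification, whereas you supply the strong induction (a negative iterate at index $\ge n$ must be preceded by a strictly positive one in $(0,\eta(\epsilon-\mu_j))$) that actually establishes it — a worthwhile addition, not a deviation.
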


\begin{proof}
$\forall k \in \mathbb{N}$ where $\theta^{n+k}_j < 0$, we have the lower bound $\eta(\mu_j - \epsilon)<\theta^{n+k}_j$. 
Then, $2\eta\mu_j<\theta^{n+k+1}_j$.
\end{proof}

The following proposition shows that, providing $\mu_j < \epsilon < t\mu_j$ for $t>1$, during gradient descent there exists consecutive positive parameter values. Taking~\cref{prop:next_is_pos} and~\cref{prop:consecutive_pos} together, for a sufficiently large number of training steps it is more likely that gradient descent ends with a learnt positive parameter than a negative parameter, which in turn implies a classifier that is accurate on unseen data from the Gaussian data model.

\vspace{0.5cm}

\begin{prop}
\label{prop:consecutive_pos}
Let $n$ denote the minimum index satisfying $\theta^n_j < 0$ and $n > n_0$, where $n_0$ is the first index satisfying $\theta^{n_0}_j > 0$. 
If $\mu_j < \epsilon < t\mu_j$, where $t>1$, then there $\exists k\in \mathbb{N}$ with $k>n$ such that $\theta^{k}_j > 0$ and $\theta^{k+1}_j > 0$.
\end{prop}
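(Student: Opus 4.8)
The plan is to squeeze a bit more out of the quantitative estimate already established inside the proof of \cref{prop:next_is_pos}, and then to split into two regimes according to whether $\epsilon\le 3\mu_j$. First note that, because $n$ is the first index past $n_0$ with $\theta^n_j<0$, its predecessor satisfies $\theta^{n-1}_j>0$ (it cannot equal $0$, or else $\theta^n_j=\eta\mu_j>0$), so $\theta^n_j=\theta^{n-1}_j+\eta(\mu_j-\epsilon)>\eta(\mu_j-\epsilon)$ and therefore $\theta^{n+1}_j>2\eta\mu_j$. The same lower bound recurs after every negative iterate beyond index $n$, and combined with $\theta^n_j>\eta(\mu_j-\epsilon)$ it confines all iterates $\theta^{n}_j,\theta^{n+1}_j,\dots$ to the bounded window $\big(\eta(\mu_j-\epsilon),\,\eta(\mu_j+\epsilon)\big)$; I will use this boundedness below.

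If $\mu_j<\epsilon\le 3\mu_j$, I would simply take $k=n+1$: we have $\theta^{n+1}_j>2\eta\mu_j>0$, and then $\theta^{n+2}_j=\theta^{n+1}_j+\eta(\mu_j-\epsilon)>2\eta\mu_j+\eta(\mu_j-\epsilon)=\eta(3\mu_j-\epsilon)\ge 0$, so both iterates are positive. For $\epsilon>3\mu_j$ I would argue by contradiction, assuming that after index $n$ no two consecutive iterates are positive. Together with \cref{prop:next_is_pos} the signs then alternate strictly, the only interruptions being isolated exact zeros, each flanked by positive iterates. Fix a large $M$ and sum the increments in \cref{eq: theta_iter} over indices $n+1,\dots,n+1+M$: writing $P,Z,N$ for the number of positive, zero and negative iterates, the net displacement equals $\eta\mu_j M+\eta\epsilon(N-P)$, which is bounded in modulus by the width $2\eta\epsilon$ of the window, so $P-N>\mu_j M/\epsilon-2\to\infty$. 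But ``no two consecutive positives'' forces $P\le N+Z+1$, hence $Z$ must also grow linearly in $M$ --- and this is impossible because an exact zero past index $n$ can only follow a positive iterate equal to exactly $\eta(\epsilon-\mu_j)$, whereas tracking the return value $R_i$ (the positive iterate right after the $i$-th negative one) shows $R_{i+1}-2\eta\mu_j\equiv (R_i-2\eta\mu_j)+2\eta\mu_j \pmod{\eta(\epsilon-\mu_j)}$, i.e.\ the $R_i$ form the orbit of a rigid rotation of a circle of circumference $\eta(\epsilon-\mu_j)$; such an orbit cannot land on a single prescribed value infinitely often while simultaneously avoiding the arc $\big(\eta(\epsilon-\mu_j),\eta(\mu_j+\epsilon)\big)$, and any $R_i$ in that arc already begins a positive run of length at least two. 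Either branch yields a contradiction, so a consecutive positive pair must occur.

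The crux of the argument is the regime $\epsilon>3\mu_j$: the case $\epsilon\le 3\mu_j$ is just bookkeeping on top of \cref{prop:next_is_pos}, but for large $\epsilon$ one genuinely has to exclude a perpetual strict alternation of signs, and the cleanest device I see for this is the identification of the post-negative return map with an isometry of a circle --- a dense orbit when $2\mu_j/(\epsilon-\mu_j)$ is irrational and a finite but sufficiently fine orbit when it is rational --- together with a separate accounting of the exact-zero iterates, which reset that recursion. Since $\theta^1_j$ is drawn from a continuous distribution, those resets happen only for a measure-zero set of initializations, so if the fully general statement turns out to be delicate I would settle for proving it for almost every initialization. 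I do not anticipate needing any tool beyond \cref{prop:next_is_pos} and this circle-rotation observation; essentially all of the work is in taming the corner cases.
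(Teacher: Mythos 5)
Your main line of attack is sound and, in the generic case, proves the proposition by a genuinely different route from the paper. The paper assumes no two consecutive positives ever occur and writes down the explicit minorant sequence $\eta(\mu_j-\epsilon),\,2\eta\mu_j,\,\eta(3\mu_j-\epsilon),\,4\eta\mu_j,\dots$ for the iterates from index $n$ onward; the entries at the ``negative'' slots grow like $\eta((s+1)\mu_j-\epsilon)$ and hence become positive once $s+1\ge t$, so choosing $s\approx\ceil{t}$ finishes the argument in one stroke, with no case split on $\epsilon$ and no appeal to boundedness. You instead combine confinement of the tail to the window $\big(\eta(\mu_j-\epsilon),\eta(\mu_j+\epsilon)\big)$ with a drift/counting argument giving $P-N\geq \mu_j M/\epsilon-2$. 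These are the same underlying fact --- under strict alternation the net motion over two steps is $+2\eta\mu_j>0$, which is incompatible with a bounded orbit --- packaged as an averaging argument rather than an explicit minorant. Your separate case $\mu_j<\epsilon\le 3\mu_j$ is correct but unnecessary: either argument covers all $\epsilon>\mu_j$ uniformly.

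The one flawed step is the circle-rotation device. Under the standing assumption of no two consecutive positives, the post-negative return values satisfy $R_{i+1}=R_i+2\eta\mu_j$ \emph{exactly}; there is no reduction modulo $\eta(\epsilon-\mu_j)$, because the moment some $R_i$ exceeds $\eta(\epsilon-\mu_j)$ you already have two consecutive positives and are done. The correct picture is a monotone escape, not a rotation: either some $R_i>\eta(\epsilon-\mu_j)$ (contradiction, proof complete), or some $R_i=\eta(\epsilon-\mu_j)$ exactly, producing a zero; after a zero the positive iterates restart at $\eta\mu_j,3\eta\mu_j,5\eta\mu_j,\dots$, so a further zero requires $\eta(\epsilon-\mu_j)$ to be an odd multiple of $\eta\mu_j$, i.e.\ $\epsilon=2m\mu_j$ for an integer $m\ge1$. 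This observation replaces your rotation argument entirely, and it also confirms that your caution about exact zeros is warranted: for $\epsilon=2m\mu_j$ the orbit $0\to\eta\mu_j\to\eta(2\mu_j-\epsilon)\to3\eta\mu_j\to\dots\to\eta(\epsilon-\mu_j)\to0$ is periodic with no two consecutive positive iterates, so the proposition as stated literally fails for the (measure-zero, countable) set of initializations that land on it. This is a corner case the paper's own proof silently excludes --- its minorant sequence breaks down as soon as an iterate equals zero --- and your fallback to ``almost every initialization'' is exactly the right repair.
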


\begin{proof}
We note that $\theta^n_j$ is lower bounded by $\eta(\mu_j - \epsilon)$.
Let us assume there is no $k \in \mathbb{N}$, with $k>n$, such that $\theta^{k}_j > 0$ and $\theta^{k+1}_j > 0$. 
Then $\{\theta^n_j, \theta^{n+1}_j, \theta^{n+2}_j, \theta^{n+3}_j, \theta^{n+4}_j, \dots\}$ is lower bounded by the sequence $\{\eta(\mu_j - \epsilon), 2\eta\mu_j, \eta(3\mu_j - \epsilon), 4\eta\mu_j, \eta(5\mu_j - \epsilon), \dots\}$. 
In particular, $\forall s\in \mathbb{N}$ with $s \equiv 1 \mod 2$, we have $\theta^{n+s}_j > \eta s\mu_j$, and $\theta^{n+s+1}_j > \eta((s+1)\mu_j - \epsilon)$. 
By assumption, $\theta^{n+s+1}_j < 0$, however, taking 

\begin{align}
\label{eq: s_choice}
s = \begin{cases}
    \ceil{t} , & \text{if } \ceil{t} \equiv 0 \mod 2 \\
    \ceil{t} + 1 , & \text{if } \ceil{t} \equiv 1 \mod 2
  \end{cases}
\end{align}

we have $\theta^{n+s}_j > 0$ and $\theta^{n+s+1}_j > \eta((s+1)\mu_j - \epsilon) > 0$, since $\epsilon < t\mu_j < (s+1)\mu_j$.
\end{proof}

\section{Experiments}
\label{sec: experiments}

\begin{figure*}[t]
  \centering
  \begin{subfigure}{0.49\textwidth}
  \centering
  \includegraphics[width=\linewidth]{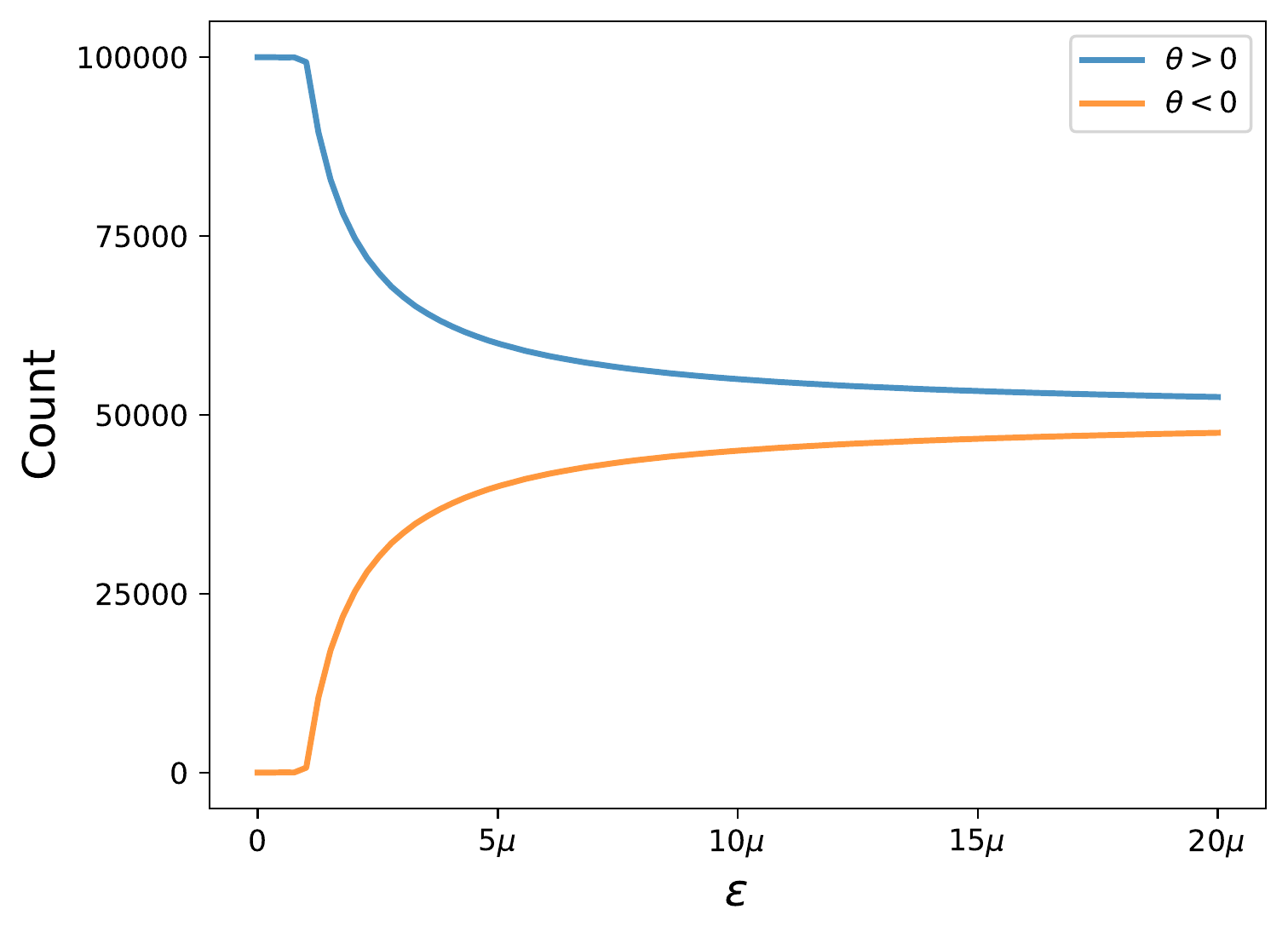}
  \caption{Linear loss}
  \label{fig:1d_linear_loss}
  \end{subfigure}%
  \begin{subfigure}{0.49\textwidth}
  \centering
  \includegraphics[width=\linewidth]{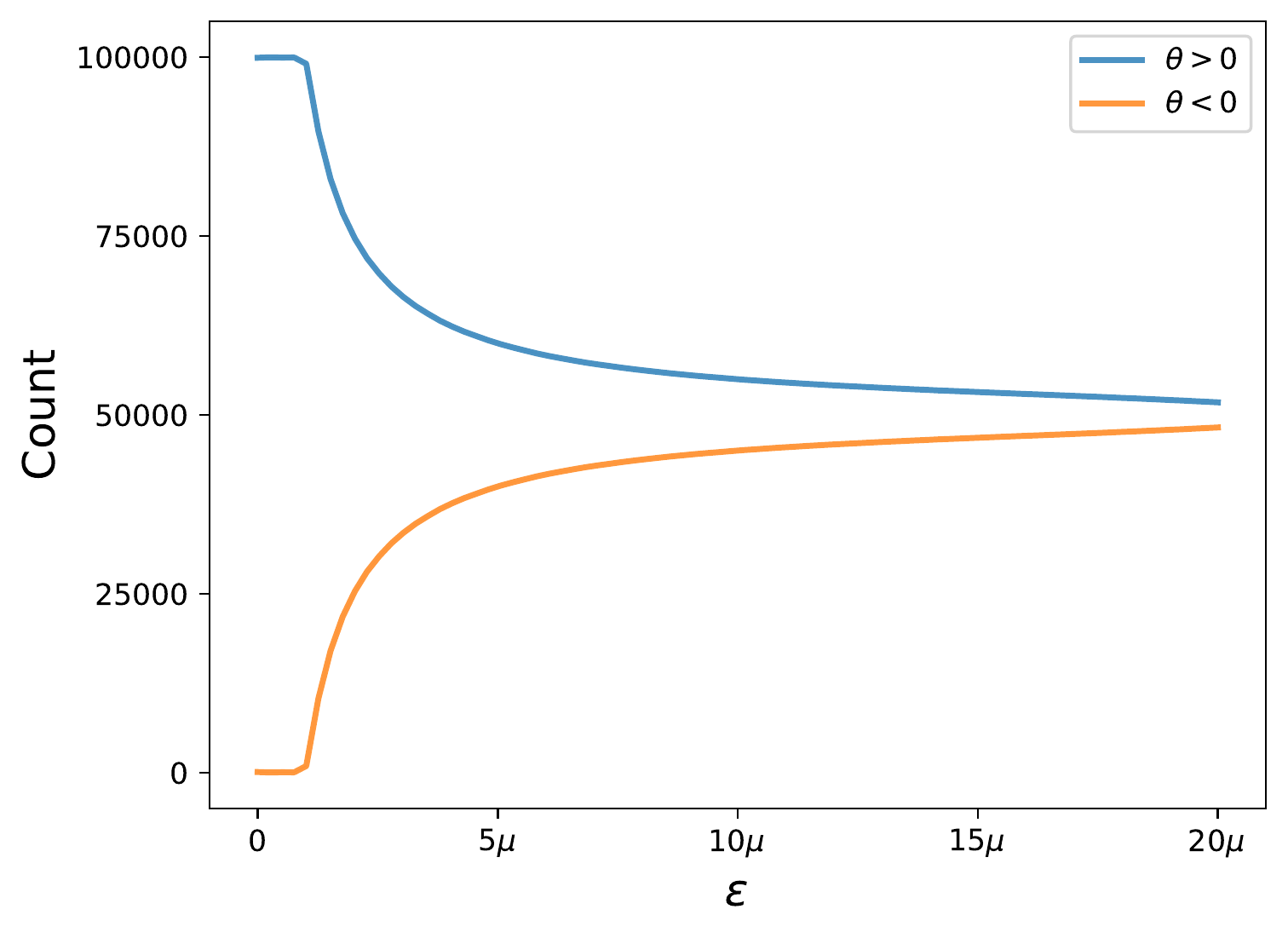}
  \caption{Cross-entropy loss}
  \label{fig:1d_cross_entropy_loss}
  \end{subfigure}
  \caption{The number of iterations during gradient descent, referred to as counts, where $\theta >0$ and where $\theta < 0$, for $0<\epsilon<20$, with $\mu=1$, using either the linear or cross-entropy loss function.}
  \label{fig:1d_loss}
\end{figure*}

\begin{figure*}[t]
\centering
\begin{subfigure}{0.33\textwidth}
  \centering
  \includegraphics[width=\linewidth]{./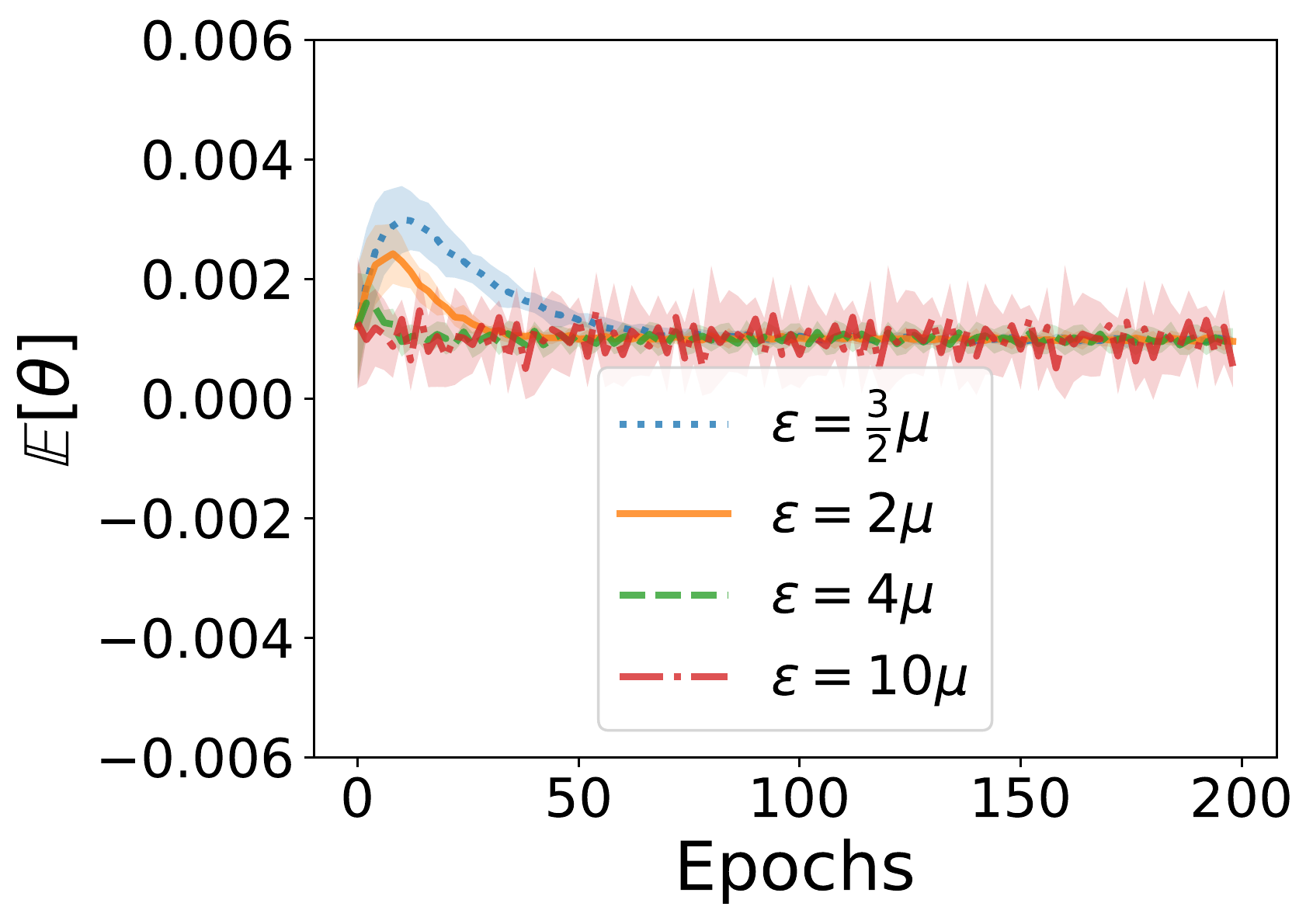}
  \caption{Linear loss, average $\theta$.}
  \label{fig:100d_linear_loss_avg_w_high}
\end{subfigure}%
\begin{subfigure}{0.33\textwidth}
  \centering
  \includegraphics[width=\linewidth]{./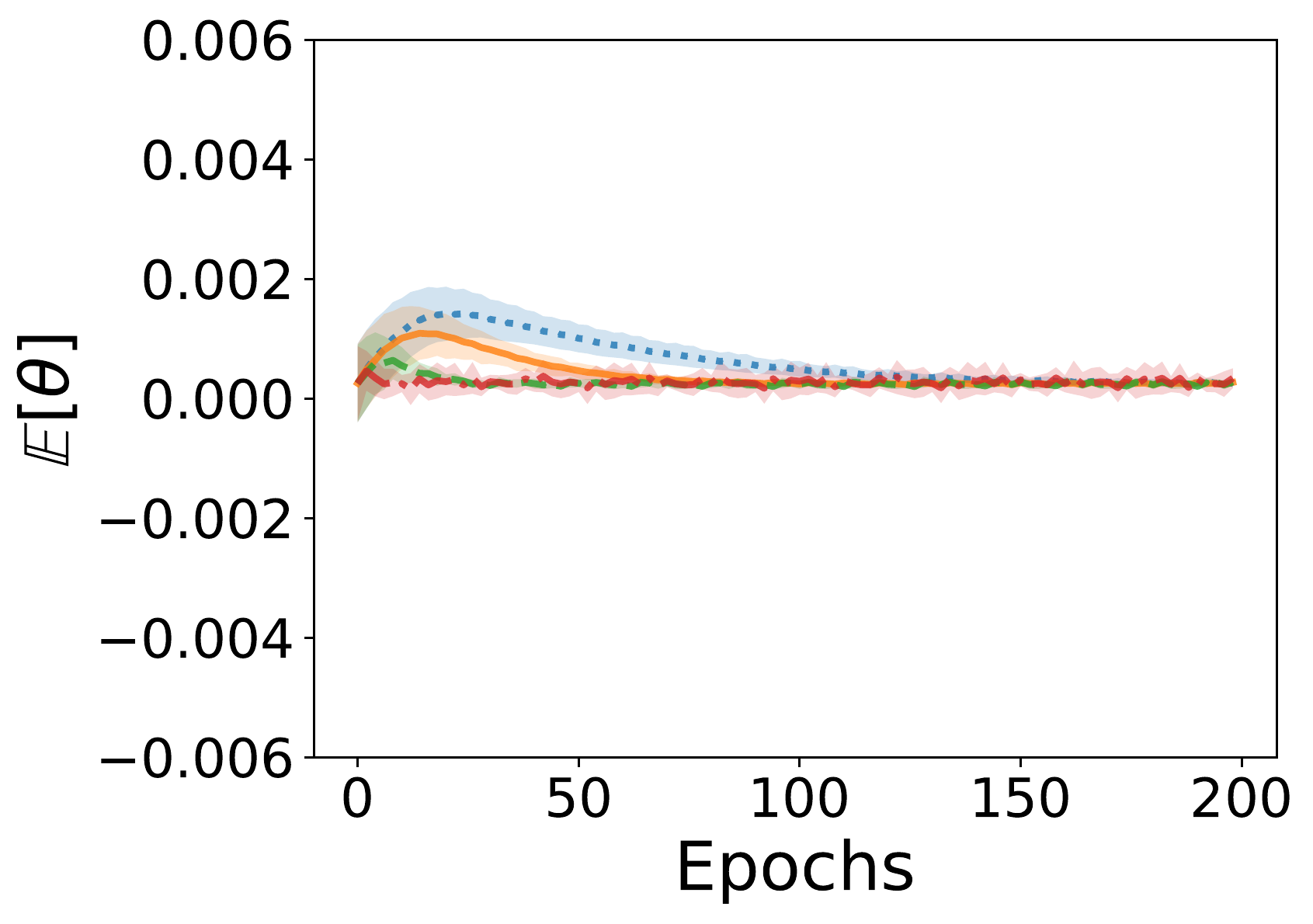}
  \caption{Cross-entropy loss, average $\theta$.}
  \label{fig:100d_cross_entropy_loss_avg_w_high}
\end{subfigure}%
\begin{subfigure}{0.33\textwidth}
  \centering
  \includegraphics[width=\linewidth]{./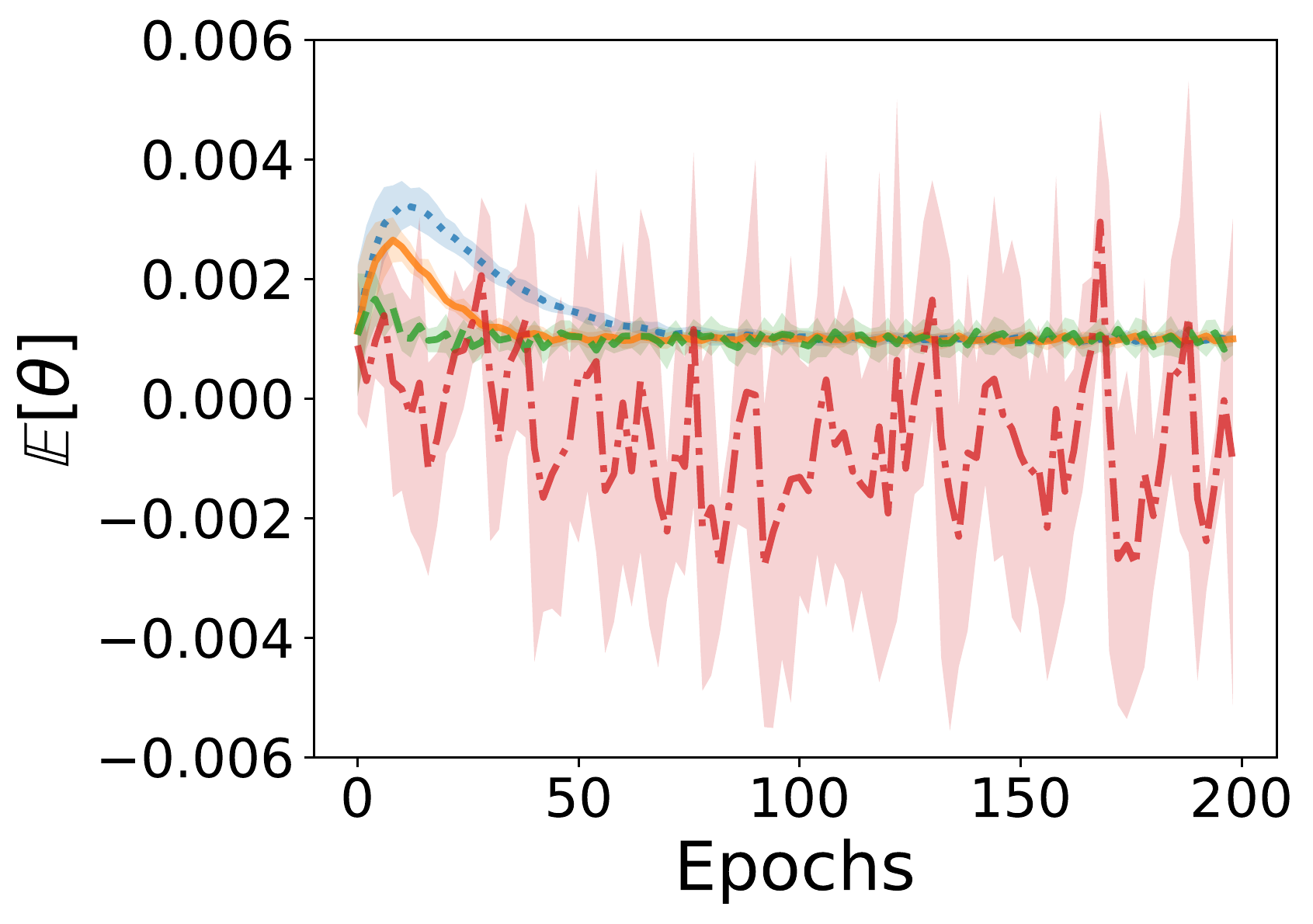}
  \caption{Hinge loss, average $\theta$.}
  \label{fig:100d_hinge_loss_avg_w_high}
\end{subfigure}

\begin{subfigure}{0.33\textwidth}
  \centering
  \includegraphics[width=\linewidth]{./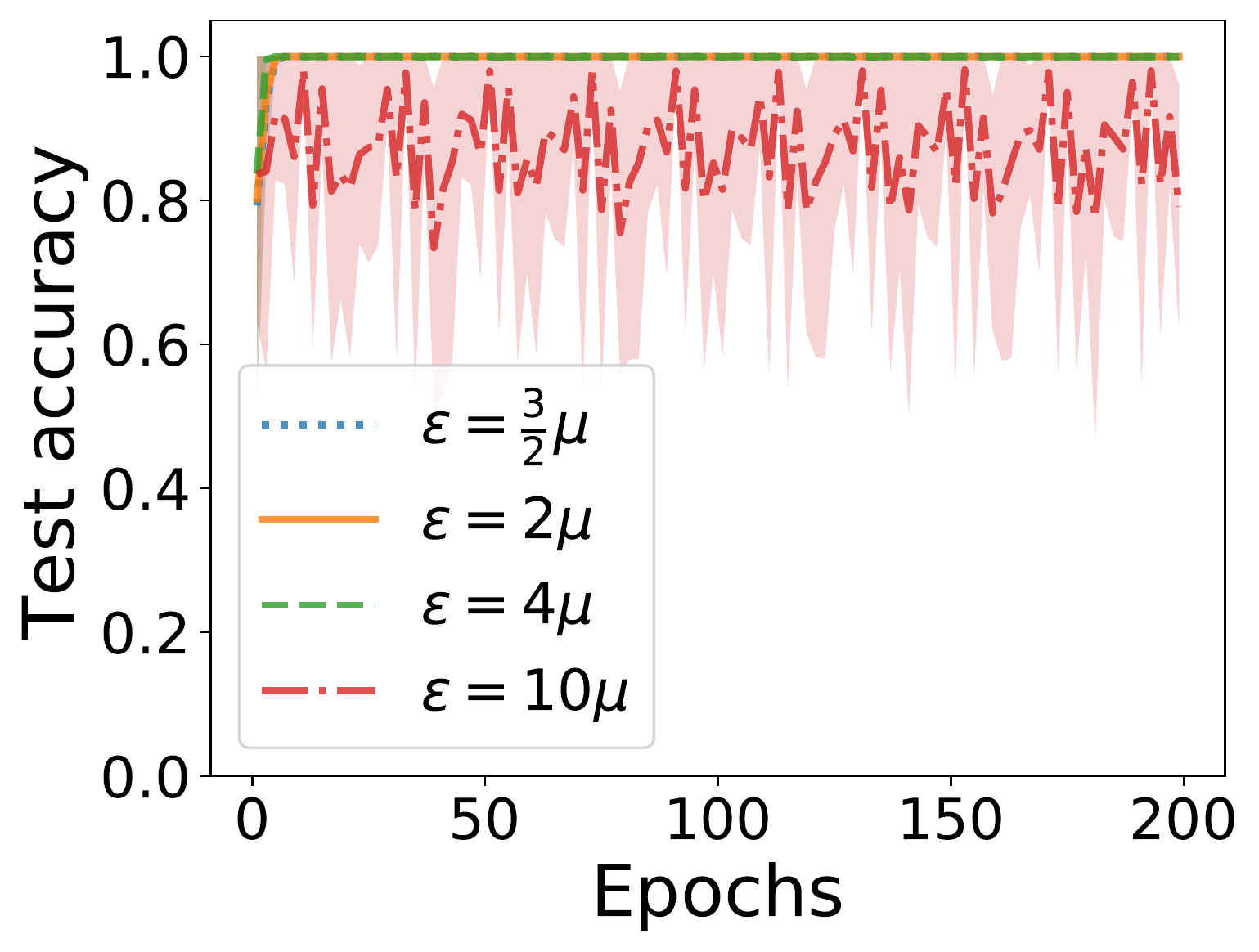}
  \caption{Linear loss, test accuracy.}
  \label{fig:100d_linear_loss_test_acc_high}
\end{subfigure}%
\begin{subfigure}{0.33\textwidth}
  \centering
  \includegraphics[width=\linewidth]{./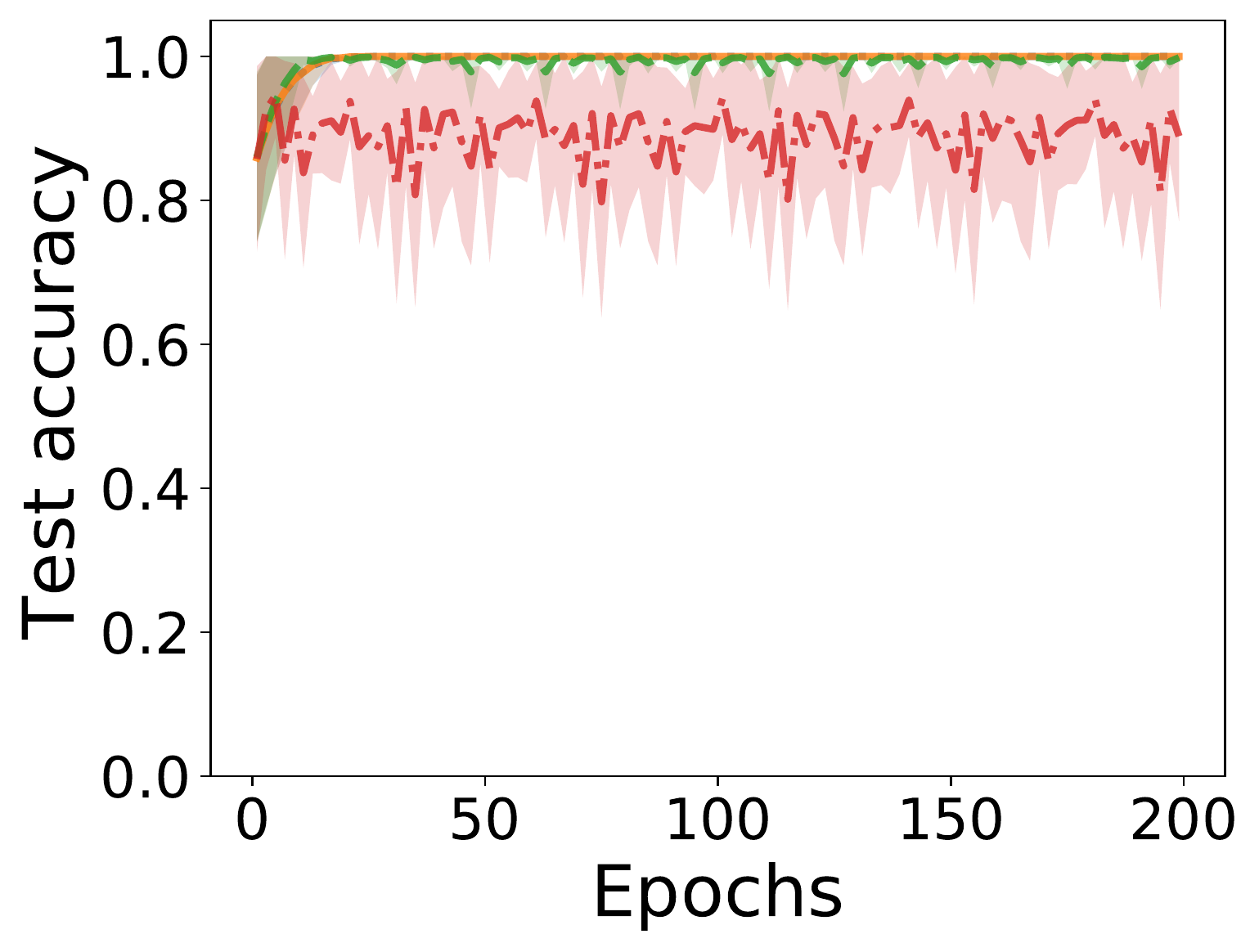}
  \caption{Cross-entropy loss, test accuracy.}
  \label{fig:100d_cross_entropy_loss_test_acc_high}
\end{subfigure}%
\begin{subfigure}{0.33\textwidth}
  \centering
  \includegraphics[width=\linewidth]{./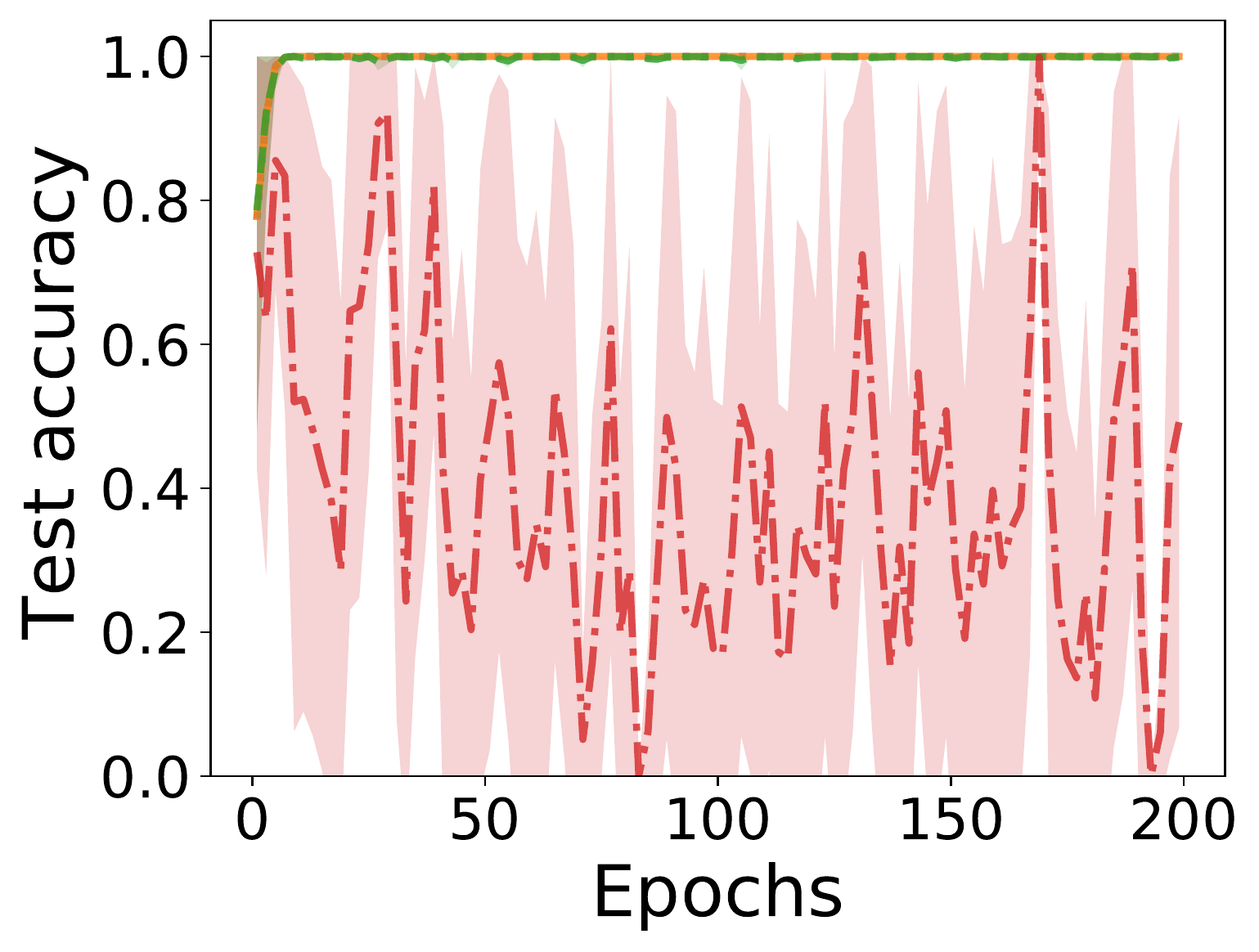}
  \caption{Hinge loss, test accuracy.}
  \label{fig:100d_hinge_loss_test_acc_high}
\end{subfigure}%
\caption{The average value of $\theta$ and test accuracy for a $100d$ Gaussian data model with $\mu=1$, $\sigma^2=1$, for $\epsilon > \mu$.}
\label{fig:100d_high}
\end{figure*}

We now give empirical results on the ability for a linear classifier to learn on the Gaussian data model under $\epsilon > \mu_j$. 
Firstly, we set $d=1$, $\mu=1$, $\sigma^2=1$, $\eta=0.001$, and set the number of gradient descent iterations to $100K$, sampling a new input from the Gaussian data model at each iteration. For $\epsilon \in [0, 20]$, we visualize the number of iterations during gradient descent where $\theta > 0$, implying an accurate classifier is learnt at these iterations, and the number of iterations at which $\theta < 0$.
\Cref{fig:1d_linear_loss} and \cref{fig:1d_cross_entropy_loss} show for a linear and cross-entropy loss function, respectively, if $0 < \epsilon < 20$, the number of occurrences where $\theta$ is positive during gradient descent exceed the number of negative occurrences.
Thus, even if $\epsilon$ is $20\times$ larger than $\mu$, it is more likely the final classifier has a small generalization error than a large generalization error. However, clearly the probability of learning a highly accurate classifier decreases as $\epsilon$ increases. As $\epsilon \rightarrow \infty$, it is less likely that consecutive gradient descent steps both have positive parameter values, and so the count of positive and negative parameter values during training $\rightarrow \nicefrac{\text{number of gradient descent steps}}{2}$. 
The test set accuracy for the linear loss (on $100K$ samples) at $\epsilon=0$ and $\epsilon=20$, is 84.12\% and 84.06\%, respectively. 
Note, that by~\cref{eq: bayes_error}, the maximum possible test set accuracy is $1-\frac{1}{2}\big(1 - \erf(\frac{1}{\sqrt{2}})\big)$, which is equal to 84.13\%.

Next, we empirically demonstrate the ability to learn with large $\epsilon$ on a higher dimensional Gaussian data model for the linear loss, cross-entropy loss, and hinge loss. For every $j\in [d]$, we set $\mu_j=\mu=1$, $\sigma^2=1$, $\eta=0.001$, and set $d=100$. 
We set both the number of training and test inputs to $100K$, and set the number of epochs to $200$.~\Cref{fig:100d_high} shows both the average value of $\theta$ and the test set accuracy throughout training, for values of $\epsilon$ greater than $\mu$. 
Firstly, we observe that for both the hinge and linear loss, the average value of $\theta$ oscillates around the learning rate, $\eta$. 
For larger $\epsilon$ values, the oscillations are greater in magnitude and occasionally fall below zero -- this is also true for the cross-entropy loss where the average value of $\theta$ oscillates around zero.
For $\epsilon = \frac{3}{2}\mu, 2\mu, 4\mu$, we achieve perfect test set accuracy after approximately 10 epochs, for each choice of loss function. 
For $\epsilon = 10\mu$, the linear loss and cross-entropy loss function results in an average test set accuracy of between 80-90\%.
The test set accuracy when using the hinge loss oscillates more dramatically, ranging from 0\% at approximately epoch 80, to 100\% at approximately epoch 175. 
These two values correspond to the extremes of the average value of $\theta$ in~\cref{fig:100d_hinge_loss_avg_w_high}. 

Thus, it is possible to achieve perfect test set accuracy even when the adversarial budget causes inputs to appear to be completely mislabeled during training. 
For completeness, we plot corresponding average values of $\theta$ and test set accuracy for $\epsilon$ values smaller than $\mu$ in \cref{sec:more_experiments_gaussian}. 
As one may expect, for $0\leq \epsilon < \mu$, the average value of $\theta$ monotonically increases, and the classifier achieves perfect test set accuracy.

The phenomenon of being able to learn under an adversary that can perturb an input by an arbitrary amount does not transfer to more complex datasets. 
In~\cref{fig:cifar10_test_acc}, we plot the accuracy of a ResNet-18 classifier~\citep{he2016deep} on the CIFAR-10 test set~\citep{krizhevsky2009learning} that has been adversarially trained using projected gradient descent~\citep{madry2017towards} with large values of $\epsilon$ ($\epsilon = \frac{32}{255}, \frac{127}{255}, \frac{255}{255}$). The learning rate was set to 0.1 and annealed to 0.01 and 0.001 at epochs 100 and 150, respectively, and we set the number of attack iterations in projected gradient descent to ten.
This corresponds to an adversary that can significantly distort the true value of each pixel during training. 
At $\epsilon =\frac{127}{255}, \frac{255}{255}$, the classifier is unable to learn and the test set accuracy does not grow above 10\% -- this corresponds to a classifier that guesses the label of an input uniformly at random. We plot examples of inputs at these values of $\epsilon$ in~\cref{sec:more_experiments_cifar}.

\begin{figure}[t]
  \centering
  \includegraphics[width=0.8\linewidth]{./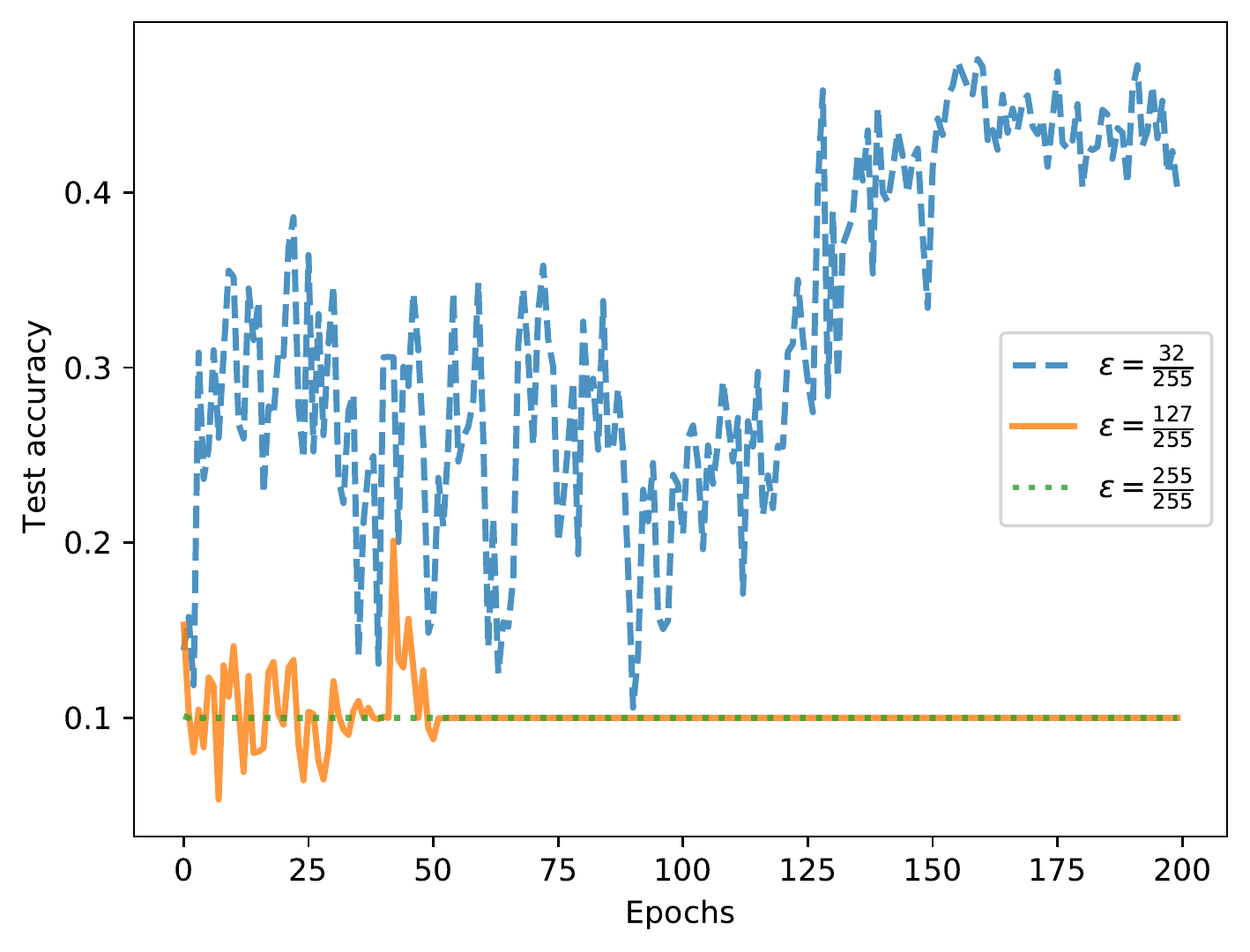}
  \caption{Test accuracy for an adversarially trained model on CIFAR-10.}
\label{fig:cifar10_test_acc}
\end{figure}

\section{Conclusion}
\label{sec:conclusion}

Due to the difficulty in characterizing complex high dimensional manifolds, it is attractive to defer to simple datasets in adversarial robustness research.
The hope being that insights made in these simple scenarios will be reflected in more `interesting' datasets and can be observed empirically. 
This work serves as a caution against such lines of reasoning; it is not always the case that theoretical
insights of adversarial robustness in simple data settings will transfer to other datasets.
In particular, we show that when the task is to learn a simple binary classification problem on Gaussian data, it is possible to achieve perfect standard test accuracy, when the classifier is trained with data perturbed by an arbitrarily strong adversary.
Thus, it is possible to achieve zero generalization error even if the data is almost entirely mislabeled. 
As one may expect, this is not a property that transfers to more realistic problems in computer vision such as classification of CIFAR-10 images.

\section*{Acknowledgements}

Jamie Hayes is funded by a Google PhD Fellowship in Machine Learning.

\bibliographystyle{icml2020}
\bibliography{references}

\begin{thebibliography}{31}
\providecommand{\natexlab}[1]{#1}
\providecommand{\url}[1]{\texttt{#1}}
\expandafter\ifx\csname urlstyle\endcsname\relax
  \providecommand{\doi}[1]{doi: #1}\else
  \providecommand{\doi}{doi: \begingroup \urlstyle{rm}\Url}\fi

\bibitem[Alayrac et~al.(2019)Alayrac, Uesato, Huang, Fawzi, Stanforth, and
  Kohli]{uesato2019are}
Alayrac, J.-B., Uesato, J., Huang, P.-S., Fawzi, A., Stanforth, R., and Kohli,
  P.
\newblock Are labels required for improving adversarial robustness?
\newblock In Wallach, H., Larochelle, H., Beygelzimer, A., d~Alch\'{e}-Buc, F.,
  Fox, E., and Garnett, R. (eds.), \emph{Advances in Neural Information
  Processing Systems 32}, pp.\  12214--12223. Curran Associates, Inc., 2019.
\newblock URL
  \url{http://papers.nips.cc/paper/9388-are-labels-required-for-improving-adversarial-robustness.pdf}.

\bibitem[Bhagoji et~al.(2019)Bhagoji, Cullina, and Mittal]{bhagoji2019lower}
Bhagoji, A.~N., Cullina, D., and Mittal, P.
\newblock Lower bounds on adversarial robustness from optimal transport.
\newblock In \emph{Advances in Neural Information Processing Systems}, pp.\
  7496--7508, 2019.

\bibitem[Biggio \& Roli(2018)Biggio and Roli]{biggio2018wild}
Biggio, B. and Roli, F.
\newblock Wild patterns: Ten years after the rise of adversarial machine
  learning.
\newblock \emph{Pattern Recognition}, 84:\penalty0 317--331, 2018.
\newblock \doi{10.1016/j.patcog.2018.07.023}.
\newblock URL \url{https://doi.org/10.1016/j.patcog.2018.07.023}.

\bibitem[Bubeck et~al.(2018)Bubeck, Price, and
  Razenshteyn]{bubeck2018adversarial}
Bubeck, S., Price, E., and Razenshteyn, I.
\newblock Adversarial examples from computational constraints.
\newblock \emph{arXiv preprint arXiv:1805.10204}, 2018.

\bibitem[Carmon et~al.(2019)Carmon, Raghunathan, Schmidt, Duchi, and
  Liang]{carmon2019unlabeled}
Carmon, Y., Raghunathan, A., Schmidt, L., Duchi, J.~C., and Liang, P.~S.
\newblock Unlabeled data improves adversarial robustness.
\newblock In \emph{Advances in Neural Information Processing Systems}, pp.\
  11190--11201, 2019.

\bibitem[Chen et~al.(2020)Chen, Min, Zhang, and Karbasi]{chen2020more}
Chen, L., Min, Y., Zhang, M., and Karbasi, A.
\newblock More data can expand the generalization gap between adversarially
  robust and standard models.
\newblock \emph{arXiv preprint arXiv:2002.04725}, 2020.

\bibitem[Cullina et~al.(2018)Cullina, Bhagoji, and Mittal]{cullina2018pac}
Cullina, D., Bhagoji, A.~N., and Mittal, P.
\newblock Pac-learning in the presence of adversaries.
\newblock In \emph{Advances in Neural Information Processing Systems}, pp.\
  230--241, 2018.

\bibitem[Diochnos et~al.(2019)Diochnos, Mahloujifar, and
  Mahmoody]{diochnos2019lower}
Diochnos, D.~I., Mahloujifar, S., and Mahmoody, M.
\newblock Lower bounds for adversarially robust pac learning.
\newblock \emph{arXiv preprint arXiv:1906.05815}, 2019.

\bibitem[Dohmatob(2018)]{dohmatob2018generalized}
Dohmatob, E.
\newblock Generalized no free lunch theorem for adversarial robustness.
\newblock \emph{arXiv preprint arXiv:1810.04065}, 2018.

\bibitem[Gourdeau et~al.(2019)Gourdeau, Kanade, Kwiatkowska, and
  Worrell]{gourdeau2019hardness}
Gourdeau, P., Kanade, V., Kwiatkowska, M., and Worrell, J.
\newblock On the hardness of robust classification.
\newblock In \emph{Advances in Neural Information Processing Systems}, pp.\
  7444--7453, 2019.

\bibitem[He et~al.(2016)He, Zhang, Ren, and Sun]{he2016deep}
He, K., Zhang, X., Ren, S., and Sun, J.
\newblock Deep residual learning for image recognition.
\newblock In \emph{Proceedings of the IEEE conference on computer vision and
  pattern recognition}, pp.\  770--778, 2016.

\bibitem[Khim \& Loh(2018)Khim and Loh]{khim2018adversarial}
Khim, J. and Loh, P.-L.
\newblock Adversarial risk bounds for binary classification via function
  transformation.
\newblock \emph{arXiv preprint arXiv:1810.09519}, 2, 2018.

\bibitem[Krizhevsky et~al.(2009)]{krizhevsky2009learning}
Krizhevsky, A. et~al.
\newblock Learning multiple layers of features from tiny images.
\newblock 2009.

\bibitem[Li et~al.(2018)Li, Min, Yu, Hsieh, Lee, and Kruus]{li2018optimal}
Li, Y., Min, M.~R., Yu, W., Hsieh, C.-J., Lee, T., and Kruus, E.
\newblock Optimal transport classifier: Defending against adversarial attacks
  by regularized deep embedding.
\newblock \emph{arXiv preprint arXiv:1811.07950}, 2018.

\bibitem[Madry et~al.(2017)Madry, Makelov, Schmidt, Tsipras, and
  Vladu]{madry2017towards}
Madry, A., Makelov, A., Schmidt, L., Tsipras, D., and Vladu, A.
\newblock Towards deep learning models resistant to adversarial attacks.
\newblock \emph{arXiv preprint arXiv:1706.06083}, 2017.

\bibitem[Mahloujifar \& Mahmoody(2018)Mahloujifar and
  Mahmoody]{mahloujifar2018can}
Mahloujifar, S. and Mahmoody, M.
\newblock Can adversarially robust learning leverage computational hardness?
\newblock \emph{arXiv preprint arXiv:1810.01407}, 2018.

\bibitem[Min et~al.(2020)Min, Chen, and Karbasi]{min2020curious}
Min, Y., Chen, L., and Karbasi, A.
\newblock The curious case of adversarially robust models: More data can help,
  double descend, or hurt generalization.
\newblock \emph{arXiv preprint arXiv:2002.11080}, 2020.

\bibitem[Montasser et~al.(2019)Montasser, Hanneke, and Srebro]{montasser2019vc}
Montasser, O., Hanneke, S., and Srebro, N.
\newblock Vc classes are adversarially robustly learnable, but only improperly.
\newblock \emph{arXiv preprint arXiv:1902.04217}, 2019.

\bibitem[Najafi et~al.(2019)Najafi, Maeda, Koyama, and
  Miyato]{najafi2019robustness}
Najafi, A., Maeda, S.-i., Koyama, M., and Miyato, T.
\newblock Robustness to adversarial perturbations in learning from incomplete
  data.
\newblock In \emph{Advances in Neural Information Processing Systems}, pp.\
  5542--5552, 2019.

\bibitem[Nakkiran(2019)]{nakkiran2019adversarial}
Nakkiran, P.
\newblock Adversarial robustness may be at odds with simplicity.
\newblock \emph{arXiv preprint arXiv:1901.00532}, 2019.

\bibitem[Papernot et~al.(2016)Papernot, McDaniel, Sinha, and
  Wellman]{papernot2016towards}
Papernot, N., McDaniel, P., Sinha, A., and Wellman, M.
\newblock Towards the science of security and privacy in machine learning.
\newblock \emph{arXiv preprint arXiv:1611.03814}, 2016.

\bibitem[Pinot et~al.(2019)Pinot, Meunier, Araujo, Kashima, Yger, Gouy-Pailler,
  and Atif]{pinot2019theoretical}
Pinot, R., Meunier, L., Araujo, A., Kashima, H., Yger, F., Gouy-Pailler, C.,
  and Atif, J.
\newblock Theoretical evidence for adversarial robustness through
  randomization.
\newblock In \emph{Advances in Neural Information Processing Systems}, pp.\
  11838--11848, 2019.

\bibitem[Pydi \& Jog(2019)Pydi and Jog]{pydi2019adversarial}
Pydi, M.~S. and Jog, V.
\newblock Adversarial risk via optimal transport and optimal couplings.
\newblock \emph{arXiv preprint arXiv:1912.02794}, 2019.

\bibitem[Raghunathan et~al.(2019)Raghunathan, Xie, Yang, Duchi, and
  Liang]{raghunathan2019adversarial}
Raghunathan, A., Xie, S.~M., Yang, F., Duchi, J.~C., and Liang, P.
\newblock Adversarial training can hurt generalization.
\newblock \emph{arXiv preprint arXiv:1906.06032}, 2019.

\bibitem[Raghunathan et~al.(2020)Raghunathan, Xie, Yang, Duchi, and
  Liang]{raghunathan2020understanding}
Raghunathan, A., Xie, S.~M., Yang, F., Duchi, J., and Liang, P.
\newblock Understanding and mitigating the tradeoff between robustness and
  accuracy.
\newblock \emph{arXiv preprint arXiv:2002.10716}, 2020.

\bibitem[Schmidt et~al.(2018)Schmidt, Santurkar, Tsipras, Talwar, and
  Madry]{schmidt2018adversarially}
Schmidt, L., Santurkar, S., Tsipras, D., Talwar, K., and Madry, A.
\newblock Adversarially robust generalization requires more data.
\newblock In \emph{Advances in Neural Information Processing Systems}, pp.\
  5014--5026, 2018.

\bibitem[Szegedy et~al.(2013)Szegedy, Zaremba, Sutskever, Bruna, Erhan,
  Goodfellow, and Fergus]{szegedy2013intriguing}
Szegedy, C., Zaremba, W., Sutskever, I., Bruna, J., Erhan, D., Goodfellow, I.,
  and Fergus, R.
\newblock Intriguing properties of neural networks.
\newblock \emph{arXiv preprint arXiv:1312.6199}, 2013.

\bibitem[Tsipras et~al.(2018{\natexlab{a}})Tsipras, Santurkar, Engstrom,
  Turner, and Madry]{tsipras2018robustness}
Tsipras, D., Santurkar, S., Engstrom, L., Turner, A., and Madry, A.
\newblock Robustness may be at odds with accuracy.
\newblock \emph{arXiv preprint arXiv:1805.12152}, 2018{\natexlab{a}}.

\bibitem[Tsipras et~al.(2018{\natexlab{b}})Tsipras, Santurkar, Engstrom,
  Turner, and Madry]{tsipras2018there}
Tsipras, D., Santurkar, S., Engstrom, L., Turner, A., and Madry, A.
\newblock There is no free lunch in adversarial robustness (but there are
  unexpected benefits).
\newblock \emph{arXiv preprint arXiv:1805.12152}, 8, 2018{\natexlab{b}}.

\bibitem[Yin et~al.(2018)Yin, Ramchandran, and Bartlett]{yin2018rademacher}
Yin, D., Ramchandran, K., and Bartlett, P.
\newblock Rademacher complexity for adversarially robust generalization.
\newblock \emph{arXiv preprint arXiv:1810.11914}, 2018.

\bibitem[Zhang et~al.(2019)Zhang, Yu, Jiao, Xing, Ghaoui, and
  Jordan]{zhang2019theoretically}
Zhang, H., Yu, Y., Jiao, J., Xing, E.~P., Ghaoui, L.~E., and Jordan, M.~I.
\newblock Theoretically principled trade-off between robustness and accuracy.
\newblock \emph{arXiv preprint arXiv:1901.08573}, 2019.

\end{thebibliography}

\newpage

\appendix
\section{More experimental results on Gaussian data}
\label{sec:more_experiments_gaussian}

Here, we plot analogous results to~\cref{fig:100d_high} for cases where the perturbation used in adversarial training, $\epsilon$, is smaller than $\mu$.
\Cref{fig:100d_medium} plots test accuracy and expected value of the parameter, $\theta$, for $\epsilon=\frac{3}{4}\mu$, $\epsilon=\frac{9}{10}\mu$, and $\epsilon=\mu$, while \cref{fig:100d_low} plots test accuracy and expected value of the parameter, $\theta$, for $\epsilon=0$ (standard training), $\epsilon=\frac{1}{10}\mu$, $\epsilon=\frac{1}{4}\mu$, and $\epsilon=\frac{1}{2}\mu$.

\begin{figure*}[htb!]
\centering
\begin{subfigure}{0.33\textwidth}
  \centering
  \includegraphics[width=\linewidth]{./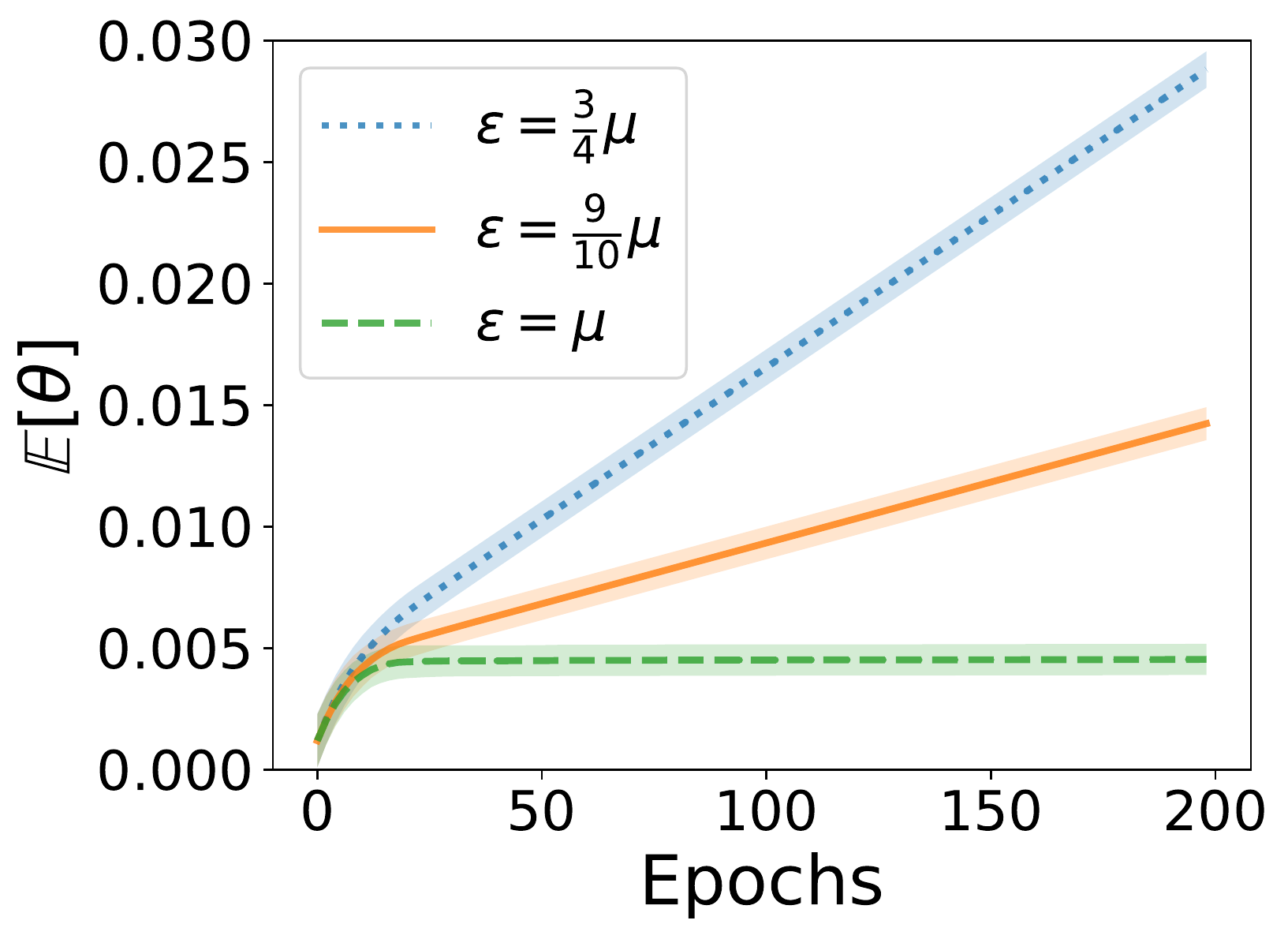}
  \caption{Linear loss, average $\theta$.}
  \label{fig:100d_linear_loss_avg_w_medium}
\end{subfigure}%
\begin{subfigure}{0.33\textwidth}
  \centering
  \includegraphics[width=\linewidth]{./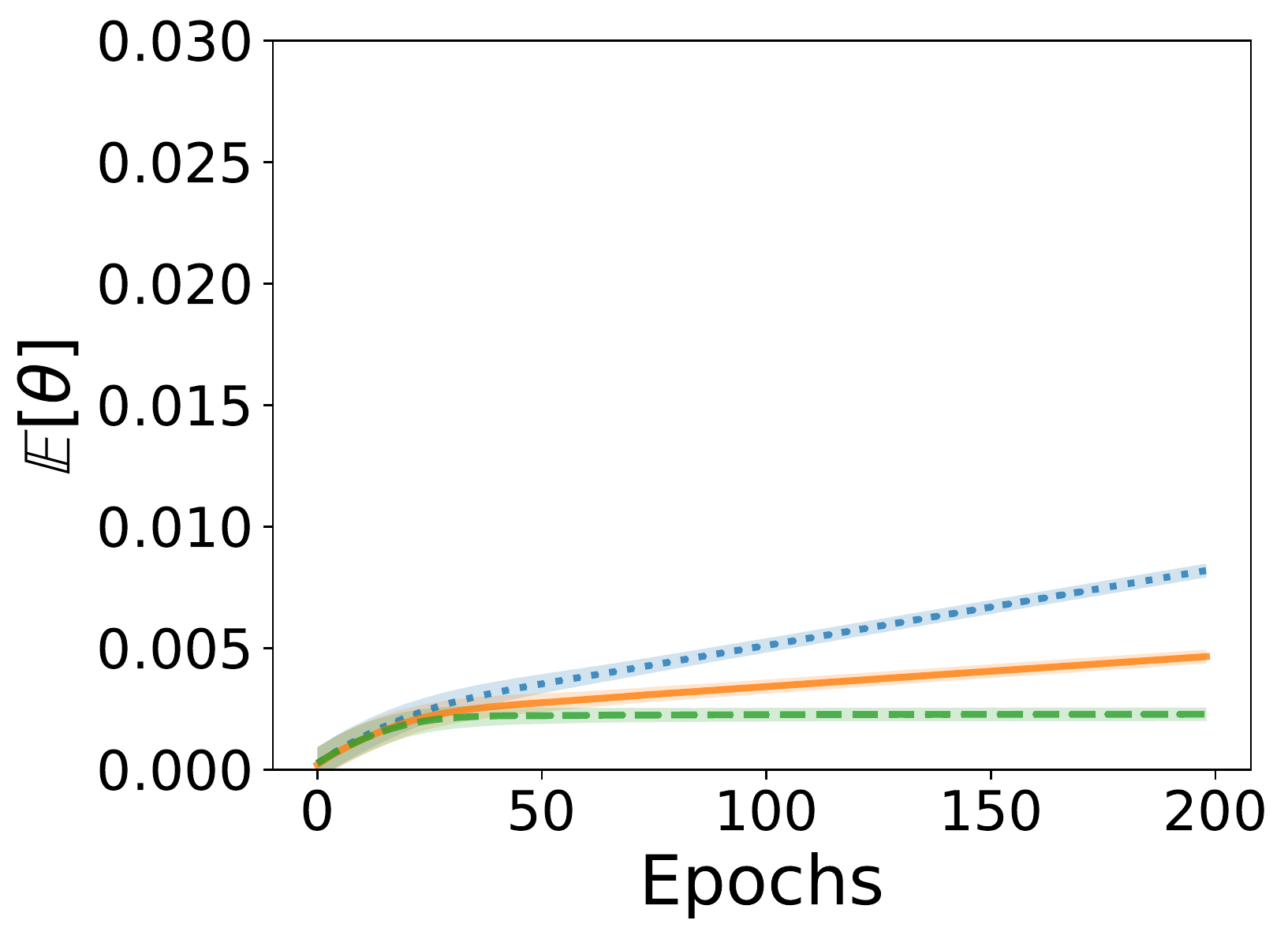}
  \caption{Cross-entropy loss, average $\theta$.}
  \label{fig:100d_cross_entropy_loss_avg_w_medium}
\end{subfigure}%
\begin{subfigure}{0.33\textwidth}
  \centering
  \includegraphics[width=\linewidth]{./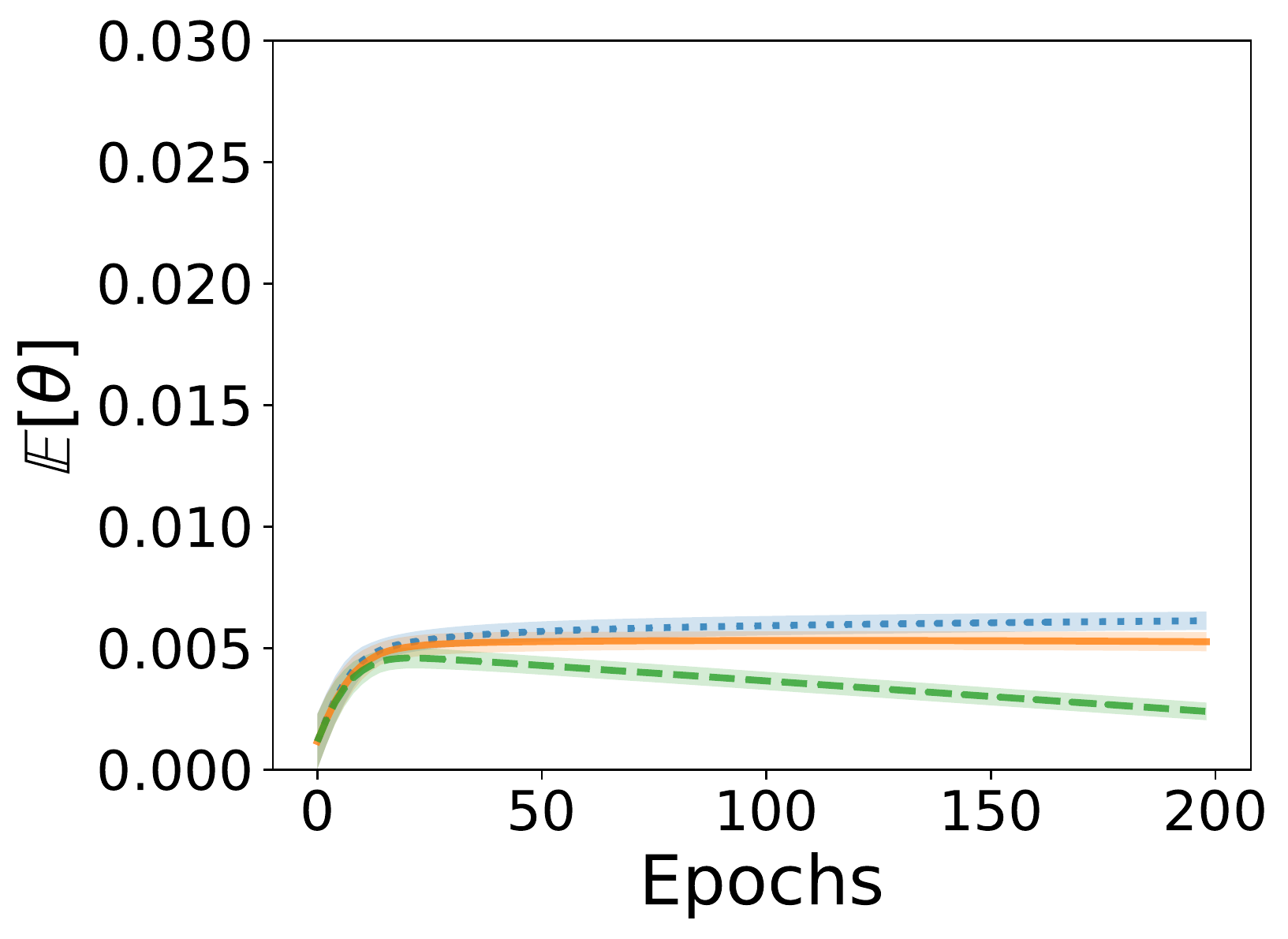}
  \caption{Hinge loss, average $\theta$.}
  \label{fig:100d_hinge_loss_avg_w_medium}
\end{subfigure}

\begin{subfigure}{0.33\textwidth}
  \centering
  \includegraphics[width=\linewidth]{./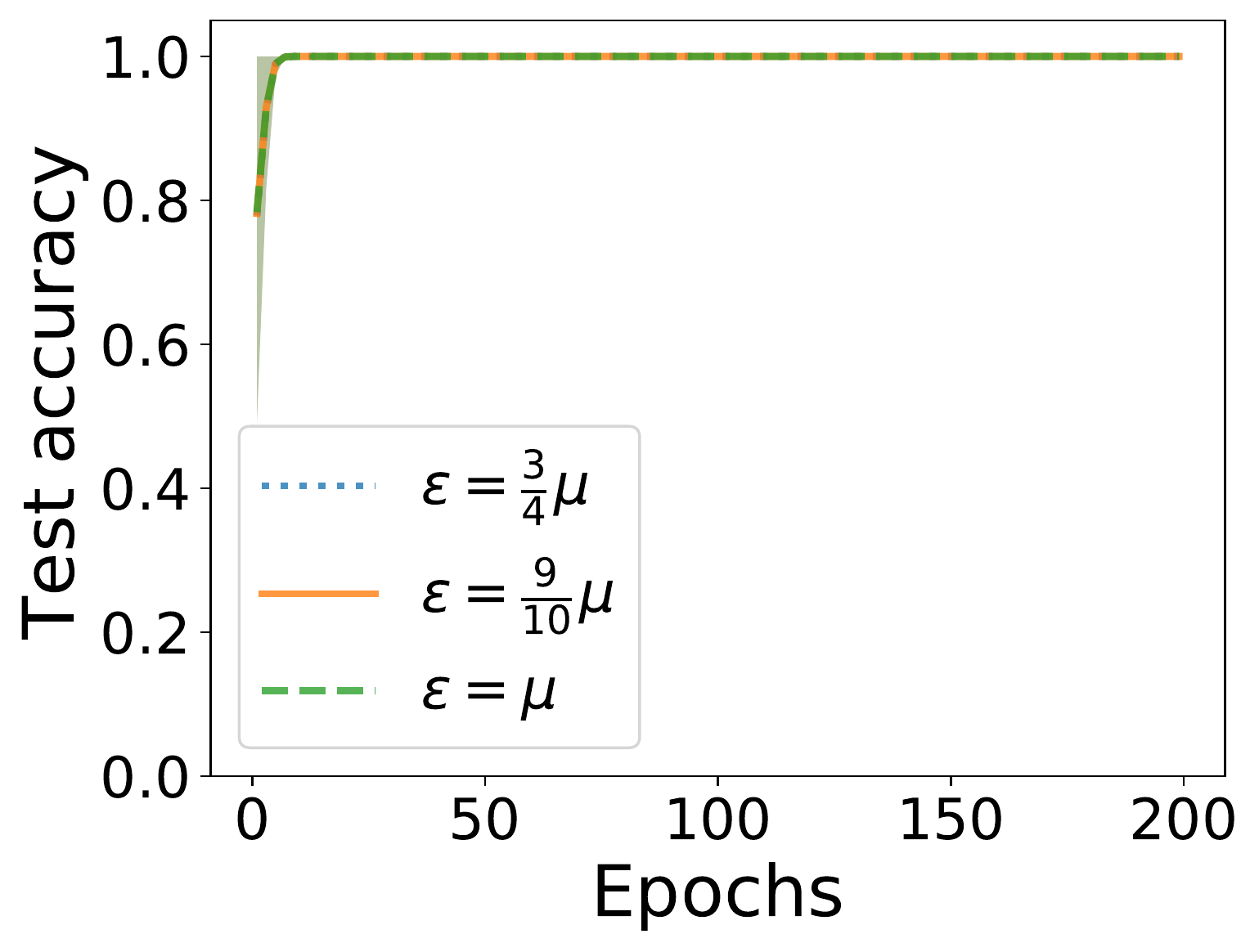}
  \caption{Linear loss, test accuracy.}
  \label{fig:100d_linear_loss_test_acc_medium}
\end{subfigure}%
\begin{subfigure}{0.33\textwidth}
  \centering
  \includegraphics[width=\linewidth]{./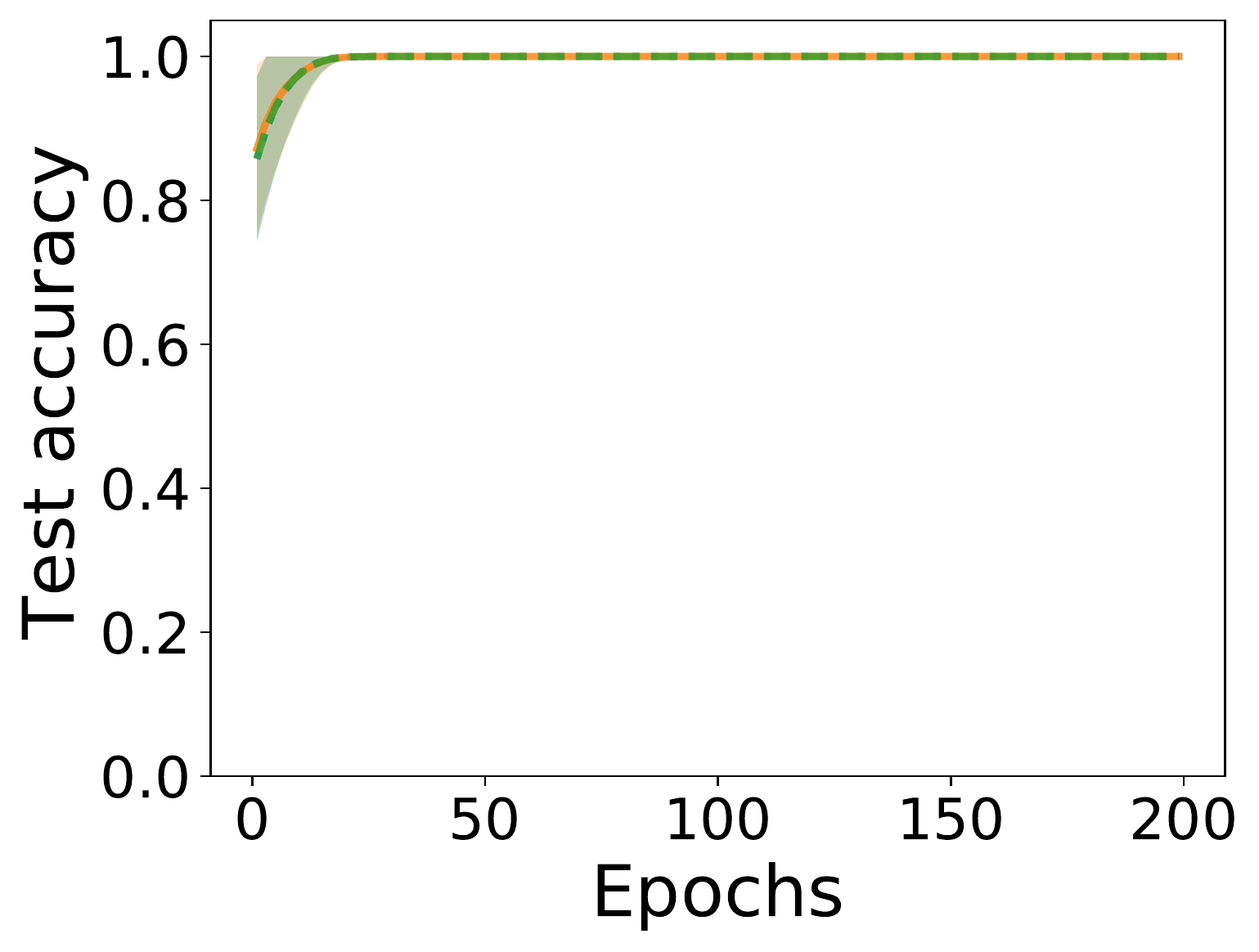}
  \caption{Cross-entropy loss, test accuracy.}
  \label{fig:100d_cross_entropy_loss_test_acc_medium}
\end{subfigure}%
\begin{subfigure}{0.33\textwidth}
  \centering
  \includegraphics[width=\linewidth]{./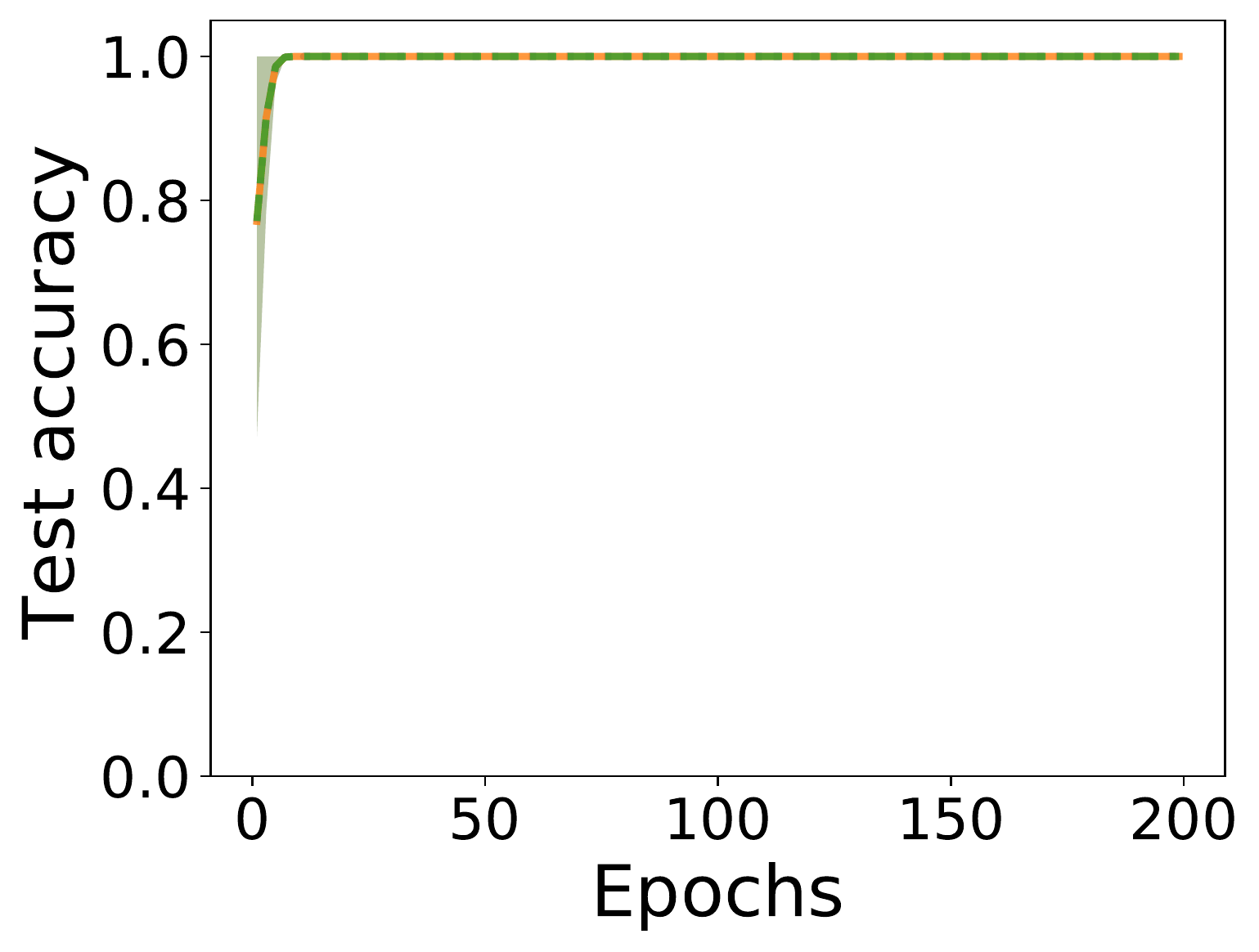}
  \caption{Hinge loss, test accuracy.}
  \label{fig:100d_hinge_loss_test_acc_medium}
\end{subfigure}%
\caption{The average value of $\theta$ and test accuracy for a $100d$ Gaussian data model with $\mu=1$, $\sigma^2=1$, for $\frac{\mu}{2} <\epsilon \leq \mu$.}
\label{fig:100d_medium}
\end{figure*}

\begin{figure*}[htb!]
\centering
\begin{subfigure}{0.33\textwidth}
  \centering
  \includegraphics[width=\linewidth]{./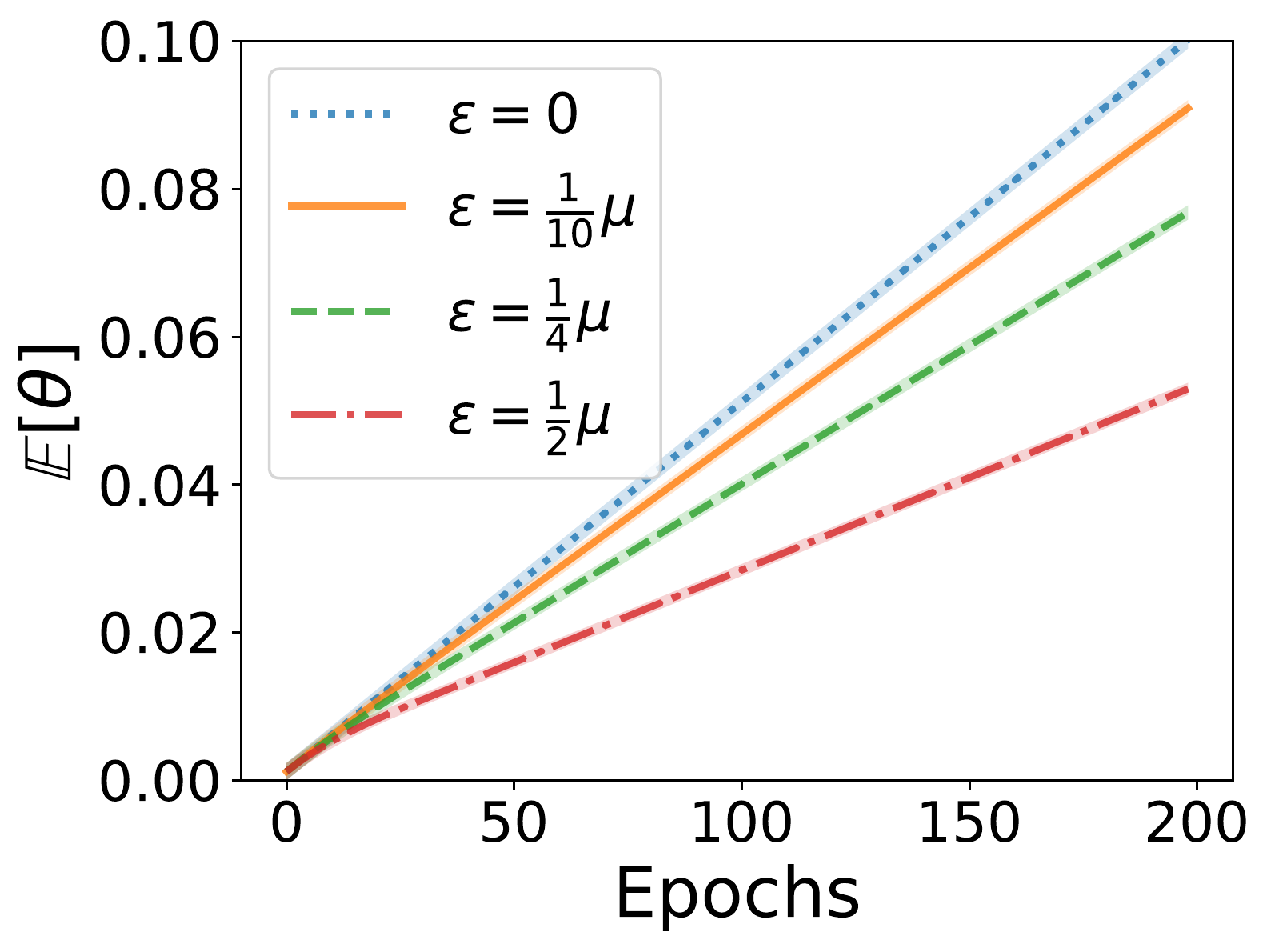}
  \caption{Linear loss, average $\theta$.}
  \label{fig:100d_linear_loss_avg_w_low}
\end{subfigure}%
\begin{subfigure}{0.33\textwidth}
  \centering
  \includegraphics[width=\linewidth]{./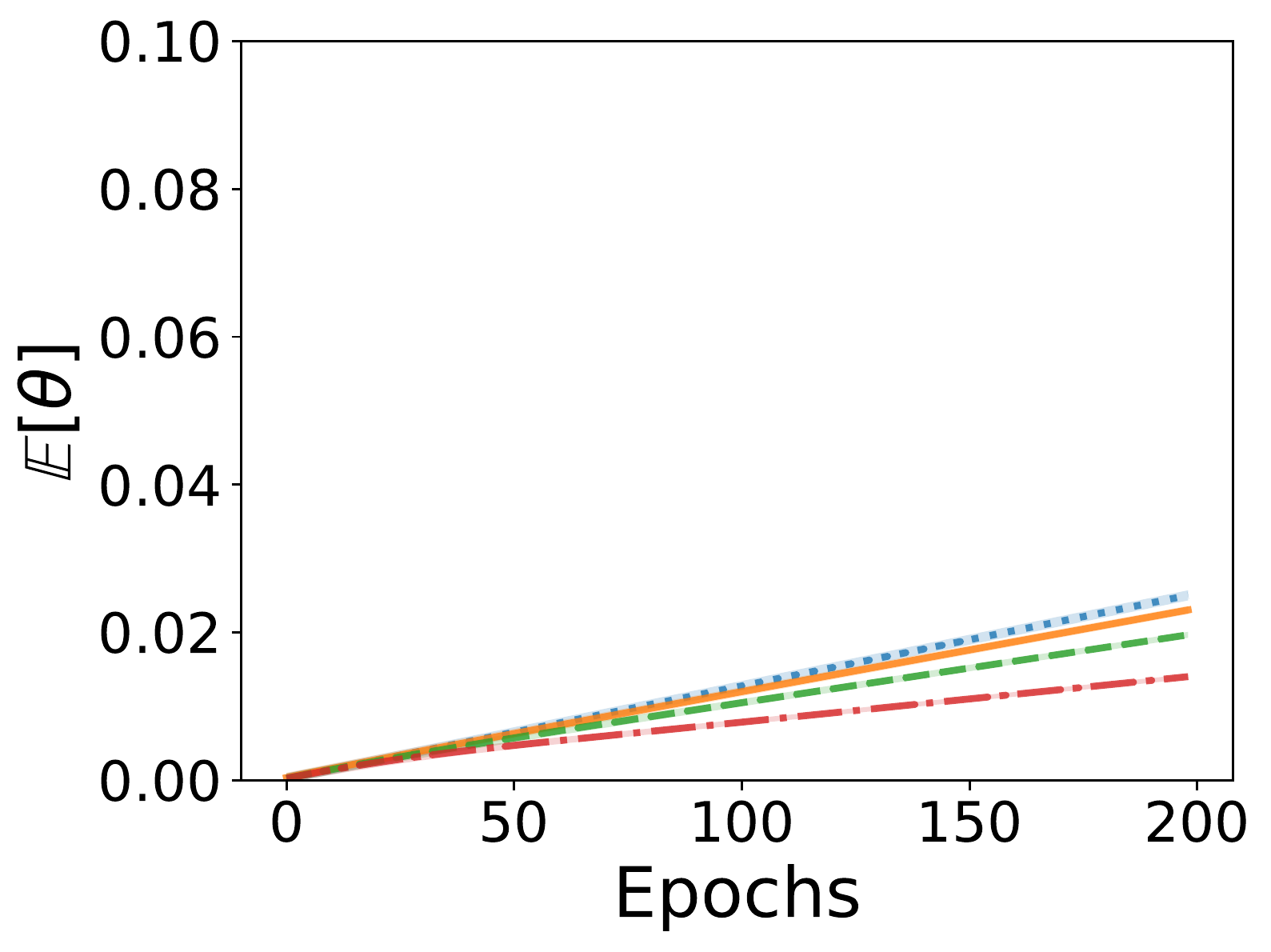}
  \caption{Cross-entropy loss, average $\theta$.}
  \label{fig:100d_cross_entropy_loss_avg_w_low}
\end{subfigure}%
\begin{subfigure}{0.33\textwidth}
  \centering
  \includegraphics[width=\linewidth]{./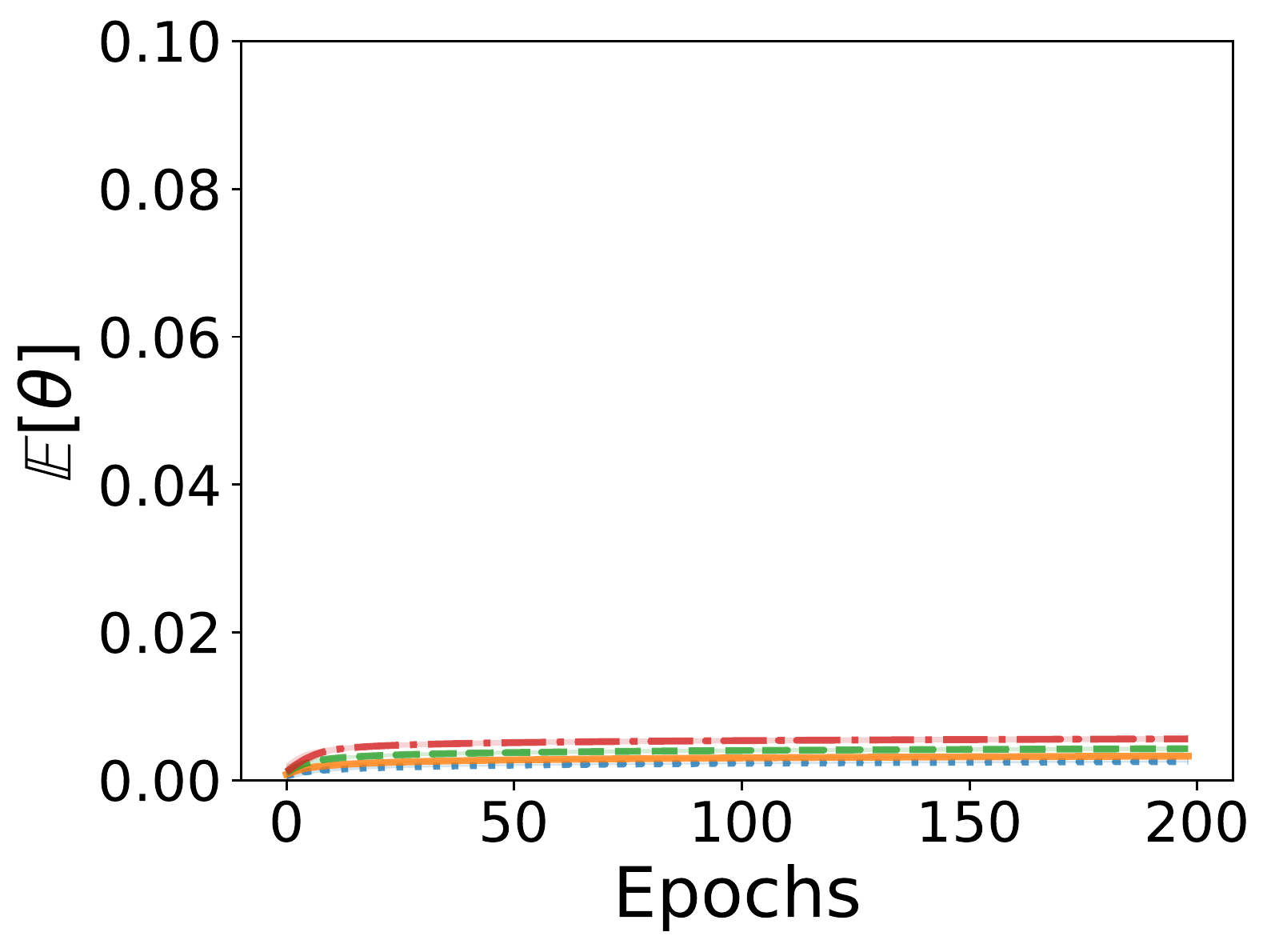}
  \caption{Hinge loss, average $\theta$.}
  \label{fig:100d_hinge_loss_avg_w_low}
\end{subfigure}

\begin{subfigure}{0.33\textwidth}
  \centering
  \includegraphics[width=\linewidth]{./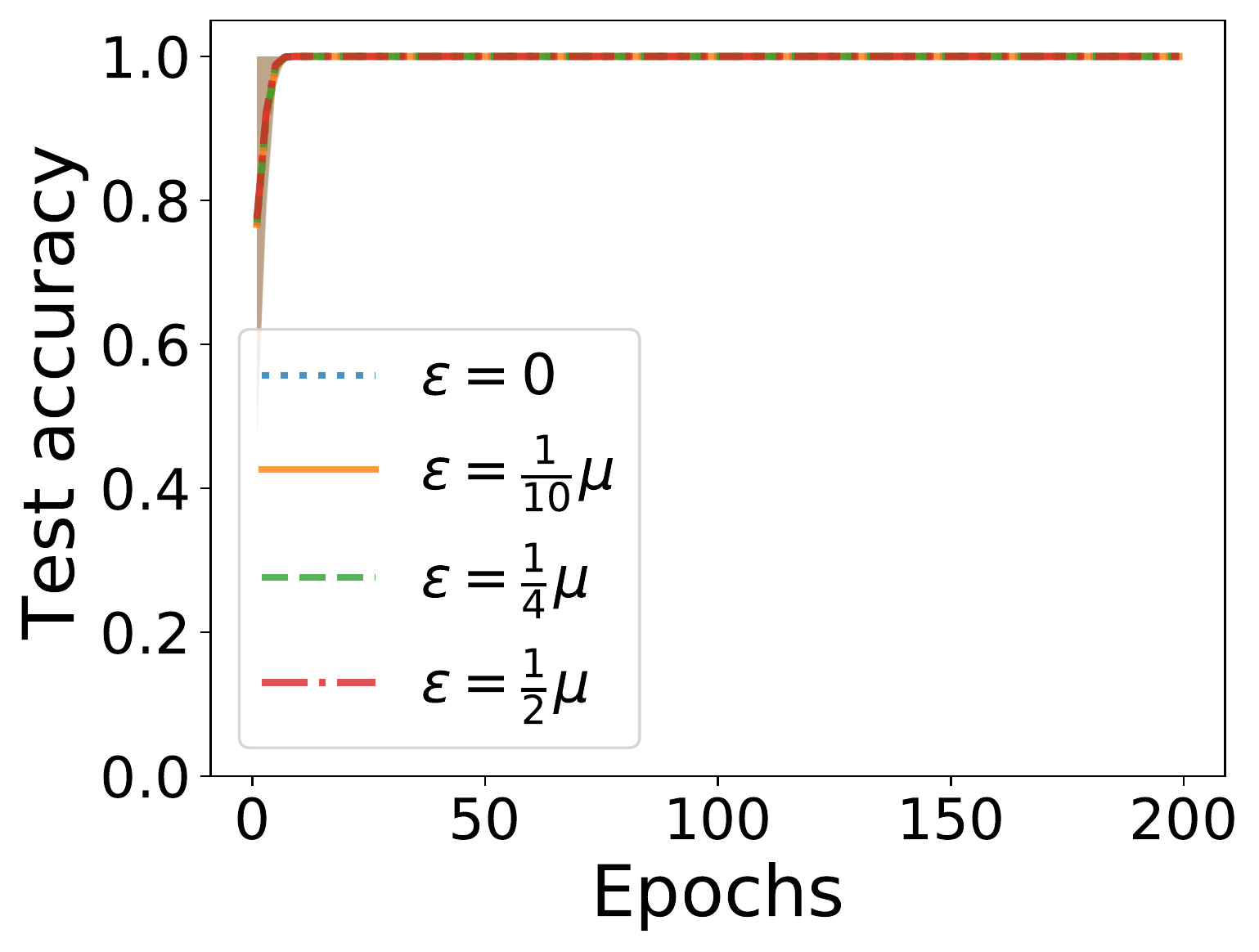}
  \caption{Linear loss, test accuracy.}
  \label{fig:100d_linear_loss_test_acc_low}
\end{subfigure}%
\begin{subfigure}{0.33\textwidth}
  \centering
  \includegraphics[width=\linewidth]{./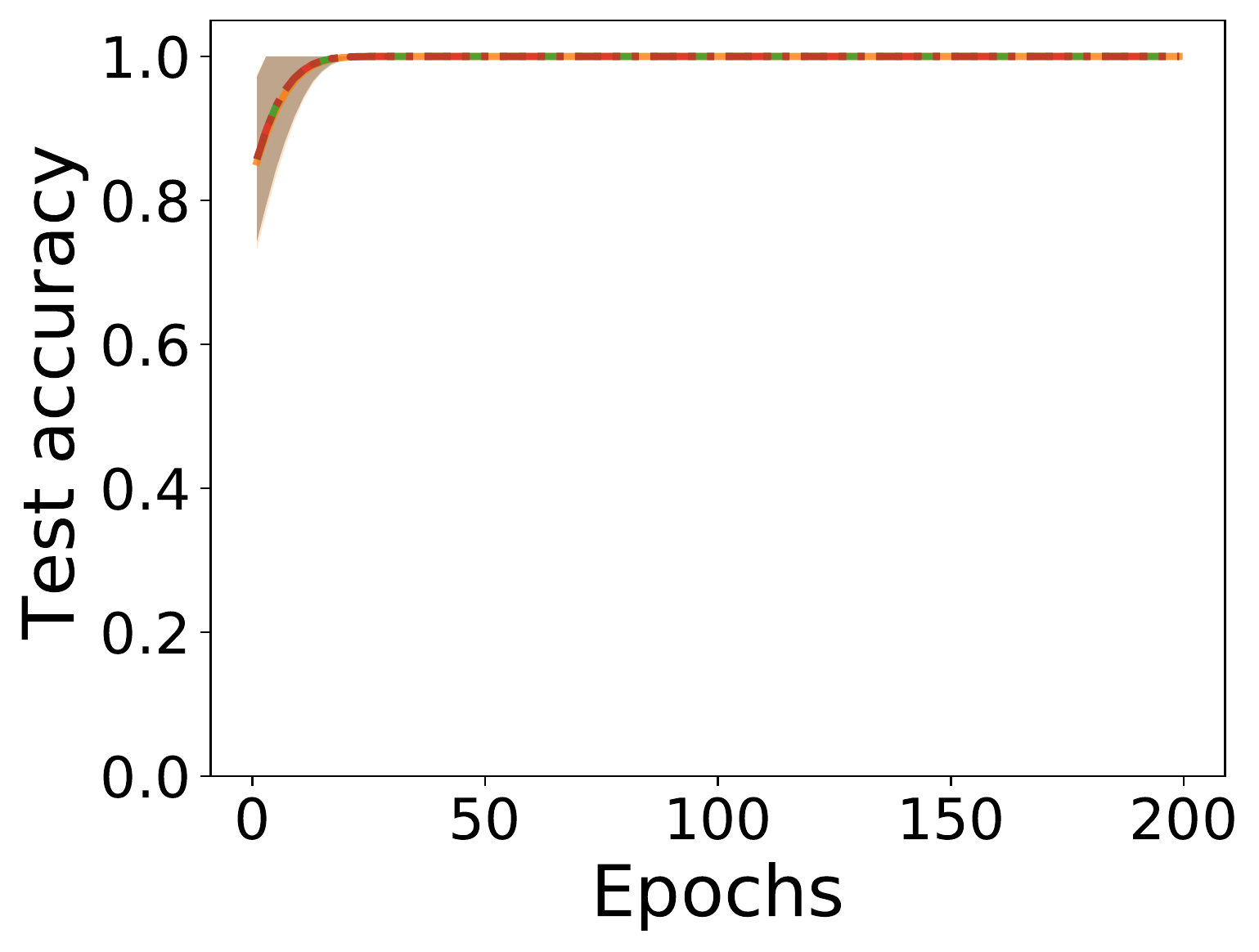}
  \caption{Cross-entropy loss, test accuracy.}
  \label{fig:100d_cross_entropy_loss_test_acc_low}
\end{subfigure}%
\begin{subfigure}{0.33\textwidth}
  \centering
  \includegraphics[width=\linewidth]{./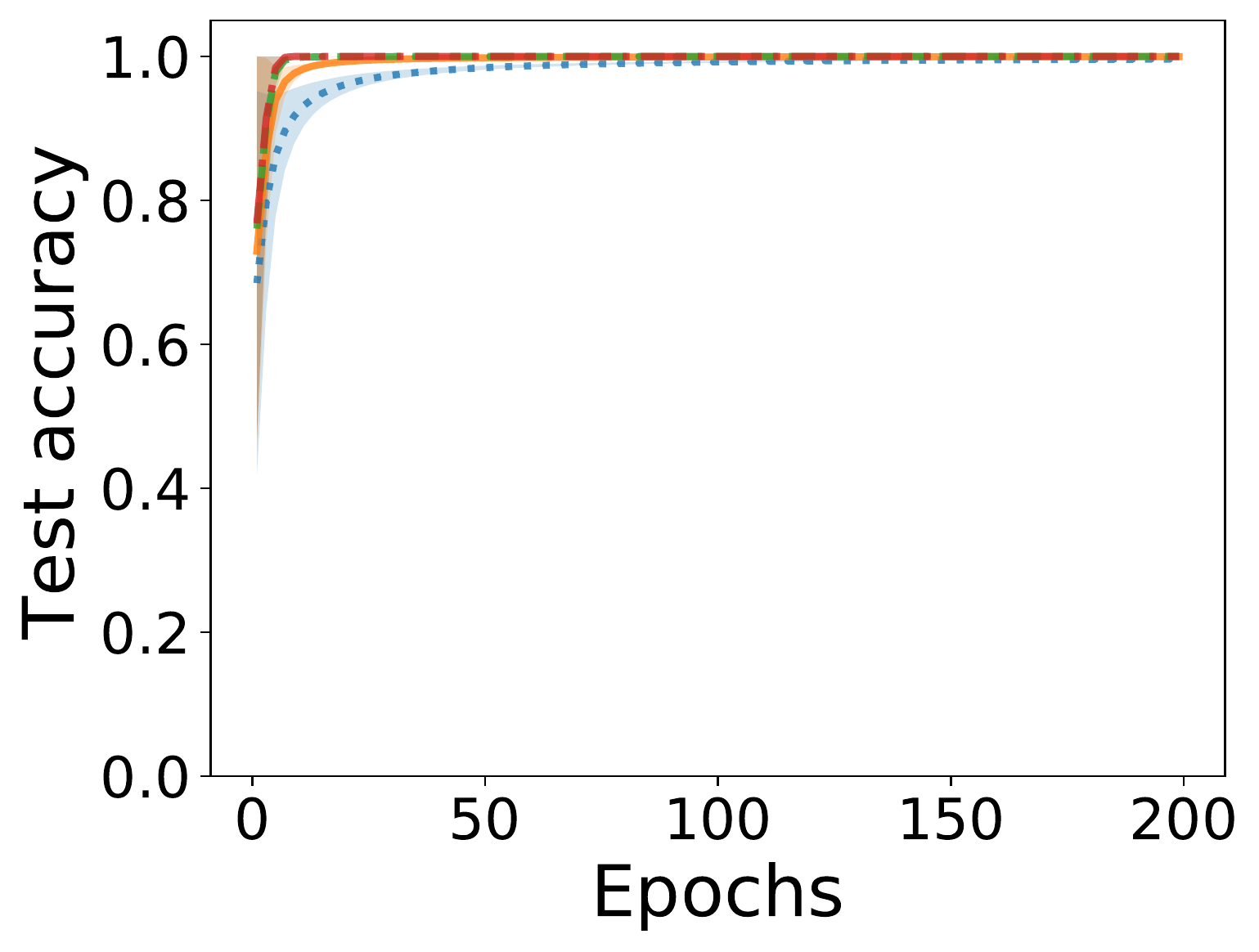}
  \caption{Hinge loss, test accuracy.}
  \label{fig:100d_hinge_loss_test_acc_low}
\end{subfigure}%
\caption{The average value of $\theta$ and test accuracy for a $100d$ Gaussian data model with $\mu=1$, $\sigma^2=1$, for $0 <\epsilon \leq \frac{\mu}{2}$.}
\label{fig:100d_low}
\end{figure*}

\clearpage

\section{More experimental results on CIFAR-10 data}
\label{sec:more_experiments_cifar}

In \cref{fig:cifar10_examples}, we plot qualitative samples of deformations caused by perturbations applied in adversarial training as detailed in the CIFAR-10 experiments in~\cref{sec: experiments}.

\begin{figure*}[htb!]
\centering
\begin{subfigure}{0.4\textwidth}
  \centering
  \includegraphics[width=0.95\linewidth]{./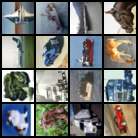}
  \caption{$\epsilon = 0$}
  \label{fig:cifar10_eps_0}
\end{subfigure}%
\begin{subfigure}{0.4\textwidth}
  \centering
  \includegraphics[width=0.95\linewidth]{./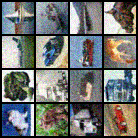}
  \caption{$\epsilon = \frac{32}{255}$}
  \label{fig:cifar10_eps_32}
\end{subfigure}

\begin{subfigure}{0.4\textwidth}
  \centering
  \includegraphics[width=0.95\linewidth]{./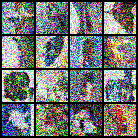}
  \caption{$\epsilon = \frac{127}{255}$}
  \label{fig:cifar10_eps_127}
\end{subfigure}%
\begin{subfigure}{0.4\textwidth}
  \centering
  \includegraphics[width=0.95\linewidth]{./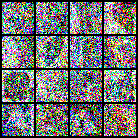}
  \caption{$\epsilon = \frac{255}{255}$}
  \label{fig:cifar10_eps_255}
\end{subfigure}%
\caption{Examples of CIFAR-10 images at various levels of distortion.}
\label{fig:cifar10_examples}
\end{figure*}

\section{A linear classifier with a linear loss cannot learn a shifted intercept}
\label{sec: problem2}

Here, we discuss further `weakness' of the linear loss function.
Let us re-define the Gaussian data model as follows. Let $(x,y)\in \mathbb{R}^d \times \{\pm 1\}$, where $y\in\{\pm 1\}$ is sampled uniformly at random, 
and $x\mid y=1 \sim \mathcal{N}(\mu_1, \sigma^2I)$,
$x\mid y=-1 \sim \mathcal{N}(\mu_2, \sigma^2I)$, where $0<\mu_2<\mu_1$ and $\sigma^2 >0$. 
This problem is not learnable when using $f_{\theta}(x) = \inp{\theta}{x} + b$ with loss function $\ell_1(f_{\theta}, y) = -y(\inp{\theta}{x} + b)$, while the problem becomes tractable by using the hinge loss, $\ell_2(f_{\theta}, y) = \max(0, -y(\inp{\theta}{x} + b))$. 
The problem stems from $\frac{\partial \ell_1}{\partial b} = -y$ and in expectation this is equal to zero since $y=\pm 1$ with probability $\nicefrac{1}{2}$. 
So the intercept parameter, $b$, will not be updated, and so if initialized by a standard Gaussian, will remain centered at zero.

\end{document}